\newtheorem{theorem}{Theorem}[section]
\newtheorem{lemma}[theorem]{Lemma}
\newtheorem{claim}[theorem]{Claim}
\newtheorem{corollary}[theorem]{Corollary}
\newtheorem{definition}[theorem]{Definition}
\newtheorem{fact}[theorem]{Fact}
\newcommand\numberthis{\addtocounter{equation}{1}\tag{\theequation}}
\newcommand{\vc}{\mathrm{VCDim}}
\newcommand{\br}{\mathbb{R}}
\newcommand{\F}{\mathcal{F}}
\newcommand{\X}{\mathcal{X}}
\newcommand{\Y}{\mathcal{Y}}
\newcommand{\A}{\mathcal{A}}
\newcommand{\D}{\mathcal{D}}
\newcommand{\M}{\mathcal{M}}
\renewcommand{\H}{\mathcal{H}}
\newcommand{\U}{\mathcal{U}}
\newcommand{\G}{\mathcal{G}}
\newcommand{\Q}{\mathcal{Q}}
\newcommand{\nadist}{\mathbfcal{D}}
\newcommand{\adist}{{\pmb{\mathscr{D}}}}
\newcommand{\err}{\mathrm{err}}
\renewcommand{\vec}[1]{\mathbf{#1}}
\newcommand{\bnumber}[3]{\mathcal{N}_{[\,]}\!\left(#1, #2, #3\right)}
\newcommand{\bnet}{\mathcal{B}} 
\newcommand{\B}{\mathcal{B}} 
\newcommand{\regret}{\textsc{Regret}}
\DeclareMathAlphabet\mathbfcal{OMS}{cmsy}{b}{n}
\DeclareMathAlphabet{\mathbfscr}{OMS}{mdugm}{b}{n}
\DeclareMathOperator*{\E}{\mathbb{E}}
\DeclareMathOperator*{\argmin}{\mathrm{argmin}}
\title{\textbf{Smoothed Analysis of Online and Differentially Private Learning}}
\author{{Nika Haghtalab} \thanks{Cornell University; Email: nika@cs.cornell.edu} \and {Tim Roughgarden} \thanks{Columbia University; Email: tr@cs.columbia.edu} \and {Abhishek Shetty} \thanks{Cornell University; Email: shetty@cs.cornell.edu} }
\begin{document}

\maketitle
\begin{abstract}
  Practical and pervasive needs for robustness and privacy in
  algorithms have inspired the design of online adversarial and
  differentially private learning algorithms. The primary quantity
  that characterizes learnability in these settings is the Littlestone
  dimension of the class of
  hypotheses~\citep{PrivatePACLD,ben2009agnostic}.  This
  characterization is often interpreted as an impossibility result
  because classes such as linear thresholds and neural networks have
  infinite Littlestone dimension.  In this paper, we apply the
  framework of smoothed analysis~\citep{ST04}, 
in which adversarially chosen inputs are perturbed slightly by nature. 
We show that fundamentally stronger regret and error
  guarantees are possible with smoothed adversaries than with
  worst-case adversaries. In particular, we obtain regret and privacy
  error bounds that depend only on the VC dimension and the bracketing
  number of a hypothesis class, and on the magnitudes of the
  perturbations.
\end{abstract}

\section{Introduction}

Robustness to changes in the data and protecting the privacy of data
are two of the main challenges faced by machine learning and have led
to the design of \emph{online} and \emph{differentially private}
learning algorithms.  While offline PAC learnability is
characterized by the finiteness of VC dimension, online and
differentially private learnability are both characterized by the
finiteness of the Littlestone dimension~\citep{PrivatePACLD,ben2009agnostic,bun2020equivalence}.
This latter characterization is often interpreted as an impossibility
result for achieving robustness and privacy on worst-case instances,
especially in classification where even simple hypothesis classes
such as $1$-dimensional thresholds have constant VC dimension but
infinite Littlestone dimension.

Impossibility results for worst-case adversaries do not invalidate the
original goals of robust and private learning with respect to
practically relevant hypothesis classes; rather, they indicate
that a new model is required to provide rigorous guidance on the
design of online and differentially private learning algorithms.
In this work, we go beyond worst-case analysis and \emph{design online
  learning algorithms and differentially private learning algorithms
  as good as their offline and non-private PAC learning counterparts 
in a realistic semi-random model of data.}

Inspired by smoothed analysis~\citep{ST04}, we introduce frameworks
for online and differentially private learning in which adversarially
chosen inputs are perturbed slightly by nature (reflecting, e.g.,
measurement errors or uncertainty).
Equivalently, we consider an adversary restricted to choose an input
distribution that is not overly concentrated, with the realized input
then drawn from the adversary’s chosen distribution.  Our goal is to
design algorithms with good expected regret and error bounds, where
the expectation is over nature’s perturbations (and any random coin
flips of the algorithm).  Our positive results show, in a precise
sense, that the known lower bounds for worst-case online and
differentially private learnability are fundamentally brittle.

\paragraph{Our Model.}
Let us first consider the standard online learning setup with an
instance space $\X$ and a set $\H$ of binary hypotheses each mapping
$\X$ to $\Y =\{+1,-1\}$.  Online learning is played over $T$ time
steps, where at each step the learner picks a prediction function from
a distribution
and the \emph{adaptive} adversary chooses a pair of $(x_t, y_t) \in \X\times \Y$. The regret of an algorithm
is the difference between the number of mistakes the algorithm makes
and that of the best fixed hypothesis in $\H$. The basic goal in
online learning is to obtain a regret of $o(T)$.  In comparison, in
differential privacy the data set
$B =\{(x_1,y_1), \dots, (x_n, y_n) \}$ is specified ahead of time. 
Our goal here is to design a randomized mechanism that with high
probability finds a nearly optimal hypothesis in $\H$ on the set $B$,
while ensuring that the computation is \emph{differentially private.}
 That is, changing a single
element of $B$ does not significantly alter the probability with which
our mechanism selects an outcome. Similar to agnostic PAC learning,
this can be done by ensuring that the error of each hypothesis
$h\in \H$ on $B$ (referred to as a query) is calculated accurately and
privately.

We extend these two models to accommodate smoothed adversaries.  We
say that a distribution $\D$ over instance-label pairs is
{\em $\sigma$-smooth} if its density function over the instance domain is
pointwise bounded by at most $1/\sigma$ times that of the uniform
distribution.  In the online learning setting this means that at step
$t$, the adversary chooses an arbitrary $\sigma$-smooth distribution
$\D_t$ from which $(x_t, y_t)\sim \D_t$ is drawn. In the differential
privacy setting, we work with a database $B$ for which the answers to
the queries could have been produced by a $\sigma$-smooth
distribution.

Why should smoothed analysis help in online learning?  Consider the
well-known lower bound for $1$-dimensional thresholds over
$\X = [0,1]$, in which the learner may as well perform binary search
and the adversary selects an instance within the uncertainty region of
the learner that causes a mistake.  While the learner's uncertainty
region is halved each time step, the worst-case adversary can use
ever-more precision to force the learner to make mistakes
indefinitely. On the other hand, a $\sigma$-smoothed adversary
effectively has bounded precision. That is, once the width of the
uncertainty region drops below $\sigma$, a smoothed adversary can no
longer guarantee that the chosen instance lands in this region.
Similarly for differential privacy, there is a $\sigma$-smooth
distribution that produces the same answers to the queries. Such a
distribution has no more than $\alpha$ probability over an interval of
width $\sigma\alpha$. So one can focus on computing the errors of the
$1/(\sigma\alpha)$ hypotheses with discreized thresholds and learn a
hypothesis of error at most $\alpha$.
Analogous observations have been made in prior works
(\cite{NIPS2011_4262}, \cite{Cohen-Addad},
\cite{Gupta_Roughgarden}), although only for very specific
settings (online learning of $1$-dimensional thresholds,
$1$-dimensional piecewise constant functions, and parameterized greedy
heuristics for the maximum weight independent set problem, respectively).
Our work is the first to demonstrate the breadth of the settings
in which fundamentally stronger learnability guarantees are possible
for smoothed adversaries than for worst-case adversaries.

\paragraph{Our Results and Contributions.} 
\begin{itemize}
\item Our main result concerns online learning with \emph{adaptive} $\sigma$-smooth adversaries where $\D_t$ can depend on the history of the play, including the earlier realizations of $x_\tau\sim \D_\tau$ for $\tau < t$. That is, $x_t$ and $x_{t'}$ can be highly correlated. 
We show that regret against these powerful adversaries is bounded by
$\tilde O (\sqrt{T \ln(\mathcal{N})})$, where $\mathcal{N}$ is the
\emph{bracketing number} of $\H$ with respect to the uniform
distribution.\footnote{Along the way, we also demonstrate a stronger
  regret bound for the simpler case of non-adaptive adversaries,
for which each distribution $\D_t$ is independent of the realized
  inputs in previous time steps.}
Bracketing number is the size of an $\epsilon$-cover of $\H$ with the additional property that hypotheses in the cover are \emph{pointwise} approximations of those in $\H$.
We show that for many hypothesis classes, the bracketing number is nicely bounded as a function of the VC dimension. This leads to the regret bound of $\tilde O( \sqrt{T\ \vc(\H)\ln(1/\sigma)})$ for  commonly used hypothesis classes in machine learning, such as halfspaces, polynomial threshold functions, and polytopes. 
In comparison, these hypothesis classes have infinite Littlestone
dimension and thus cannot be learned  with regret $o(T)$ in the worst case~\citep{ben2009agnostic}.

From a technical perspective, we introduce a novel approach for
bounding time-correlated non-independent stochastic processes over
infinite hypothesis classes using the notion of bracketing
number. Furthermore, we introduce systematic approaches, such as
high-dimensional linear embeddings and $k$-fold operations, for
analyzing the bracketing number of complex hypothesis classes. We
believe these techniques are of independent interest.

\item For differentially private learning, we obtain an error bound of 
  $\tilde O\big( \ln^{\frac 38}(1/\sigma)\sqrt{\vc(\H) /n} \big)$; the
  key point is that this bound is independent of the size~$|\X|$ of
  the domain and the size $|\H|$ of the hypothesis class.  We obtain
  these bounds by modifying two commonly used mechanisms in
  differential privacy, the Multiplicative Weight Exponential Mechanism of
  \cite{NIPS2012_4548} and the SmallDB algorithm of \cite{SmallDB}. 
With worst-case adversaries,
these algorithms achieve only error bounds of
  $\tilde O(\ln^{\frac 14}(|\X|)\sqrt{\ln(|\H|) /n})$ and
  $\tilde O( \sqrt[3]{\vc(\H) \ln(|\X|) / n} )$, respectively.
Our results also improve over those in~\citet{DPMultWeights} which
concern a similar notion of
  smoothness and achieve an error bound of
  $\tilde O( \ln^{\frac 12}(1/\sigma)\sqrt{ \ln(|\H|) / n})$.

\end{itemize}

\paragraph{Other Related Works.}
At a higher level,  our work is related to several works on the intersection of machine learning and beyond the worst-case analysis of algorithms (e.g.,  \citep{Dispersion,dekel2017online,kannan2018smoothed})
that are covered in more detail in \hyperref[app:related]{Appendix \ref*{app:related}}.
\section{Preliminaries}
\textbf{Online Learning.}~
We consider a measurable instance space $\X$ and the
label set $\Y = \{+1 , -1\}$. Let $\H$ be a hypothesis class on $\X$ with its  VC dimension denoted by $\vc(\H)$.
Let $\U$ be the uniform distribution over $\X$ with density  function $u(\cdot)$.
For a distribution $\D$ over $\X\times \Y$, let $p(\cdot)$ be the \emph{probability density function} of its marginal over $\X$. We say that $\D$ is \emph{$\sigma$-smooth} if for all $x\in \X$, $p(x) \leq u(x) \sigma^{-1}$.
For a labeled pair $s = (x, y)$ and a hypothesis $h\in \H$,
$\err_{s}(h) = 1(h(x) \neq y)$ indicates whether $h$ makes a
mistake on $s$.

We consider the setting of  \emph{online adversarial and (full-information) learning}. In this setting, a learner and an adversary play a repeated game over $T$ time steps.
In every time step $t\in [T]$ the learner picks a hypothesis $h_t$ and adversary picks a $\sigma$-smoothed distribution $\D_t$ from which a labeled pair $s_t = (x_t, y_t)$ such that $s_t\sim \D_t$ is generated. The learner then incurs penalty of   $\err_{s_t}(h_t)$.
We consider two types of adversaries. First (and the subject of our main results) is called 
 an \emph{adaptive} $\sigma$-smooth adversary. This adversary at every time step  $t\in [T]$ chooses $\D_t$ based on the actions of the
  learner $h_1,\dots, h_{t-1}$ and, importantly, the realizations of the previous
  instances $s_1, \dots,  s_{t-1}$.  We denote this adaptive random process by
  $\vec s \sim \adist$.
A second and less powerful type of adversary is called a  \emph{non-adaptive} $\sigma$-smooth adversary. Such an adversary first chooses an unknown sequence of
  distributions $\nadist = (\D_1, \dots, \D_T)$ such that $\D_t$ is a
  $\sigma$-smooth distribution for all $t\in [T]$. Importantly, $\D_t$  does not depend on realizations of adversary's earlier actions $s_1, \dots, s_{t-1}$ or the learner's actions $h_1, \dots, h_{t-1}$. We denote this non-adaptive random process by $\vec s \sim \nadist$.
 With a slight abuse of notation, we denote by $\vec x\sim \adist$ and $\vec x\sim \nadist$ the sequence of (unlabeled) instances in $\vec s \sim \adist$ and $\vec s\sim \nadist$.

Our goal is to design an online algorithm $\A$ such that expected regret against an adaptive adversary,
\[
\E[\regret(\A, \adist)]{:=}\E_{\vec s\sim \adist}  \left[ \sum_{t=1}^T \err_{s_t}(h_t)
 - \min_{h\in \H}  \sum_{t=1}^T \err_{s_t}(h)
\right]
\]
is sublinear in $T$. We also consider the regret of an algorithm against a non-adaptive adversary defined similarly as above and denoted by  $\E[\regret(\A, \nadist)]$.

\paragraph{Differential Privacy.}
We also consider differential privacy. In this setting, a data set $S$ is a multiset of elements from domain $\X$. Two data sets $S$ and $S'$ are said to be \emph{adjacent} if they differ in at most one element. 
A randomized algorithm $\M$ that takes as input a data set is $(\epsilon, \delta)$-differentially private if for all $\mathcal{R}\subseteq \mathrm{Range(\M)}$ and for all adjacent data sets $S$ and $S'$, $\Pr \left[ \M \left(S \right) \in \mathcal{R} \right] \leq \exp(\epsilon) \Pr\left[  \M\left(S' \right) \in \mathcal{R}  \right] + \delta$. 
    If $\delta = 0$, the algorithm is said to be purely $\epsilon$-differentially private.

For differentially private learning, one considers a fixed class of \emph{queries} $\Q$. 
The learner's goal is to evaluate these queries on a given data set $S$.
For ease of notation, we work with the empirical distribution $\D_S$ corresponding to a data set $S$. 
Then the learner's goal is to approximately compute $q(\D_S) = \E_{x\sim \D_S}[q(x)]$ while preserving privacy\footnote{In differentially private learning, queries are the error function of hypotheses  and take as input a pair $(x,y)$.}.
We consider two common paradigms of differential privacy. First, called  \emph{query answering}, involves designing a mechanism that outputs  values $v_q$ for all $q\in \mathcal{Q}$ such that
with probability $1-\beta$ for every $q\in Q$, $|q(\D_S) - v_q| \leq \alpha$.
The second paradigm, called \emph{data release}, involves designing a mechanism that outputs a synthetic distribution $\overline{\D}$, such that with probability $1-\beta$ for all $q\in \mathcal{Q}$, $|q(\overline{\D})  - q(\D_S)|\leq \alpha$. That is, the user can use $\overline{\D}$ to compute the value of any $q(\D_S)$ approximately.

Analogous to the definition of smoothness in online learning, we say that a distribution $\D$ with density function $p(\cdot)$ is \emph{$\sigma$-smooth} if $p(x) \leq \sigma^{-1}u(x)$ for all $x \in \X$. We also work with a weaker notion of smoothness  of data sets. A data set $S$ is said to be 
\emph{$\left(\sigma, \chi\right)$-smooth} with respect to a query set $\Q$ if there is a $\sigma$-smooth
 distribution $\D $ such that for all $q \in \Q$, we have $ \abs{ q\left({\D}  \right)  - q\left(\D_S \right) } \leq \chi   $. 
The definition of $\left(\sigma, \chi\right)$-smoothness, which is also referred to as \emph{pseudo-smoothness} by 
\cite{DPMultWeights}, captures data sets that though might be concentrated on some elements, the query class is not capable of noticing their lack of smoothness.

\paragraph{Additional Definitions.}
Let $\H$ be a hypothesis class and let $\D$ be a distribution.
$\H'$ is an $\epsilon$-cover for $\H$ under $\D$ if for all $ h\in \H $, there is a $h' \in \H' $ such that $ \Pr_{x \sim \D}  \left[ h \left( x \right) \neq h'\left( x \right)  \right]  {\leq \epsilon}  $.
For any $\H$ and $\D$, there an $\epsilon$-cover $\H'\subseteq \H$ under  $\D$ such that  $|\H'| \leq (41/\epsilon)^{\vc(\H)}$ (\cite{HAUSSLER1995217}).

We define a partial order $\preceq$ over functions such that $f_1\preceq f_2$ if and only if for all $x\in \X$, we have $f_1(x) \leq f_2(x)$.
For a pair of functions $f_1, f_2$ such that $f_1\preceq f_2$, a \emph{bracket} $\left[ f_1 , f_2 \right]$ is defined by $
\left[ f_1 , f_2 \right] = \left\{ f : \X \to \left\{ -1,1\right\} : f_1 \preceq f \preceq f_2 \right\}.$
Given a measure $\mu$ over $\X$, a bracket $\left[f_1, f_2 \right]$ is called an \emph{$\epsilon$-}bracket if $\Pr_{x\sim \mu}\left[ f_1(x) \neq f_2(x) \right]\leq \epsilon$. 

\begin{definition}[Bracketing Number]
Consider an instance space $\X$, measure $\mu$ over this space, and hypothesis class $\F$.
A set $\bnet$  of brackets is called an \emph{$\epsilon$-bracketing of $\F$ with respect to measure $\mu$} if all brackets in $\bnet$ are $\epsilon$-brackets with respect to $\mu$ and for  every $f\in \F$ there is $[f_1, f_2] \in \bnet$ such that $f\in [f_1,f_2]$.
The \emph{$\epsilon$-bracketing number} of $\F$ with respect to measure $\mu$, denoted by  $\bnumber{\mathcal{F}}{ \mu }{ \epsilon }$, is the size of the smallest $\epsilon$-bracketing for $\F$ with respect to $\mu$.
\end{definition}
\section{Regret Bounds for Smoothed Adaptive and Non-Adaptive Adversaries}
\label{sec:regret-prelim}
In this section, we obtain regret bounds against smoothed adversaries.
For finite hypothesis classes $\H$, existing no-regret algorithms such as Hedge~\citep{HEDGE} and Follow-the-Perturbed-Leader~\citep{FTPL} achieve a regret bound of $O( \sqrt{T \ln(\H)} )$.
For a possibly infinite hypothesis class our approach uses a finite set $\H'$ as a \emph{proxy} for $\H$ and only focuses on competing with hypotheses in $\H'$ by running a standard no-regret algorithm on $\H'$.
Indeed, in absence of smoothness of $\adist$, $\H'$ has to be a good  proxy with respect to every distribution or know the adversarial sequence ahead of time, neither of which are possible in the online setting. But when distributions are smooth, $\H'$ that is a good proxy for the uniform distribution can also be a good proxy for all other smooth distributions.
We will see that  how well a set $\H'$ approximates $\H$ depends on adaptivity (versus non-adpativity) of the adversary. 
Our main technical result in \hyperref[sec:regret]{Section \ref*{sec:regret}} shows that
 for adaptive adversaries this approximation depends on the size of the $\frac{\sigma}{4\sqrt{T}}$-bracketing cover of $\H$. This results in an algorithm whose regret 
is sublinear in $T$ and logarithmic in that bracketing number for adaptive adversaries (\hyperref[thm:main-ada]{Theorem~\ref*{thm:main-ada}}). In comparison,  for simpler non-adaptive adversaries this approximation depends on the size of the more traditional $\epsilon$-covers of $\H$, which do not require pointwise approximation of $\H$.   This leads to an algorithm against non-adaptive adversaries with an improved regret bound of  $\tilde O(\sqrt{T \cdot  \vc(\H)})$ (\autoref{thm:main-ada}). 

In \hyperref[sec:bracket-eg]{Section \ref*{sec:bracket-eg}}, we demonstrate that the bracketing numbers of commonly used hypothesis classes in machine learning are small functions of their VC dimension. We also provide systematic approaches for bounding the bracketing number of complex hypothesis classes in terms of the bracketing number of their simpler  building blocks. This shows that for many commonly used hypothesis classes --- such as halfspaces, polynomial threshold functions, and polytopes --- we can achieve a regret of $\tilde O(\sqrt{T \cdot  \vc(\H)})$ even against an adaptive adversary.

\subsection{Regret Analysis and the Connection to Bracketing Number}
\label{sec:regret}
In more detail,  consider an algorithm $\A$ that uses Hedge on a finite set $\H'$ instead of $\H$. Then,
    \begin{equation} \label{eq:regretdecomposition}
        \E[\regret(\A, \adist)] \leq  O\left( \sqrt{T \ln(|\H'|)} \right) + \E_{\adist}\left[\max_{h\in \H}\min_{h'\in \H'} \sum_{t=1}^T 1\left(h(x_t) \neq h'(x_t) \right)  \right],
    \end{equation}
where the first term is the regret against the best $h'\in \H'$ and the second term captures how well $\H'$ approximates $\H$.
A natural choice of $\H'$ is an $\epsilon$-cover of $\H$ with respect to the uniform distribution, for a small $\epsilon$ that will be defined later.
This bounds the first term using the fact that there is an $\epsilon$-cover $\H'\subseteq \H$ of size $|\H'|\leq (41/\epsilon)^{\vc(\H)}$.
To bound the second term, we need to understand whether there is a hypothesis $h\in \H$ whose value over \emph{an adaptive sequence  of $\sigma$-smooth distributions} can be drastically different from the value of its closest (under uniform distribution) proxy $h'\in \H'$. Considering the symmetric difference functions $f_{h, h'} = h\Delta h'$ for functions $h\in \H$ and their corresponding proxies $h'\in \H'$, we need to bound (in expectation) the maximum value an $f_{h,h'}$ can attain over an adaptive sequence of $\sigma$-smooth distributions.

\paragraph{Non-Adaptive Adversaries.} To develop more insight, let us first consider the case of \emph{non-adaptive} adversaries. In the case of non-adaptive adversaries,  $x_t\sim \D_t$ are \emph{independent} of each other, while they are not identically distributed.
This independence is the key property that allows us to use the VC dimension of the set of functions $\{f_{h, h'}\mid \forall h\in \H \text{ and the corresponding proxy } h'\in \H' \}$ to establish a uniform convergence property where with high probability every function $f_{h,h'}$ has a value that is close to its expectation --- the fact that $x_t$s are not identically distributed can be easily handled because the double sampling and symmetrization trick in VC theory can still be applied as before.
Furthermore, $\sigma$-smoothness of the distributions implies that 
$\E_{\nadist}[\sum f_{h,h'}(x_t)] \leq \sigma^{-1} \E_{\U}[\sum f_{h,h'}(x_t)] \leq \epsilon/\sigma$. This leads to the following theorem for non-adaptive adversaries.

\begin{theorem}[Non-Adaptive Adversary~\citep{haghtalab2018foundation}] \label{thm:nonadaptive}
Let $\H$ be a hypothesis class of VC dimension $d$. There is an algorithm such that for any  $\nadist$ that is an non-adaptive sequence of $\sigma$-smooth distributions has regret
$    \E[\regret(\A,\nadist)] \in O \left( \sqrt{ d T \ln\left(\frac T \sigma \right)} \right).$
\end{theorem}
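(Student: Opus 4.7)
The plan is to execute the program sketched in equation~\eqref{eq:regretdecomposition}. First, I take $\H'$ to be an $\epsilon$-cover of $\H$ with respect to the uniform distribution $\U$, of size $|\H'| \le (41/\epsilon)^d$, where $\epsilon$ will be chosen at the end, and run Hedge on $\H'$. This immediately handles the first term of \eqref{eq:regretdecomposition}: Hedge's standard guarantee contributes $O(\sqrt{T d \ln(1/\epsilon)})$ to the regret. What remains is the approximation term
\[
\E_{\nadist}\!\left[\max_{h\in\H}\min_{h'\in\H'}\sum_{t=1}^T f_{h,h'}(x_t)\right],\qquad f_{h,h'}(x) := 1(h(x)\neq h'(x)).
\]

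For each $h\in\H$ fix $h'\in\H'$ to be a closest proxy under $\U$, so that $\E_{x\sim\U}[f_{h,h'}(x)]\le\epsilon$. The two key ingredients for bounding the approximation term are then (i) the $\sigma$-smoothness of each $\D_t$, which yields the pointwise bound $\E_{x\sim\D_t}[f_{h,h'}(x)]\le \sigma^{-1}\E_{x\sim\U}[f_{h,h'}(x)]\le \epsilon/\sigma$, and hence $\sum_t \E_{\nadist}[f_{h,h'}(x_t)]\le T\epsilon/\sigma$ for every $h$ and its proxy; and (ii) a uniform convergence argument over the class $\F=\{h\,\Delta\,h' : h\in\H,\ h'\in\H'\}$, whose VC dimension is $O(d)$ by standard closure of VC classes under symmetric difference (since $\H'\subseteq\H$).

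Because the adversary is non-adaptive, the draws $x_t\sim\D_t$ are independent, though not identically distributed. The classical symmetrization/ghost-sample argument underlying VC uniform convergence uses only independence, so it applies here unchanged, and combined with a chaining or Massart-style bound gives a uniform deviation of $O(\sqrt{dT\ln T})$ between $\sum_t f(x_t)$ and $\sum_t\E[f(x_t)]$ over $f\in\F$, both with high probability and in expectation. Combining with (i) yields
\[
\E_{\nadist}\!\left[\max_{h}\min_{h'}\sum_{t=1}^T f_{h,h'}(x_t)\right]\ \le\ \frac{T\epsilon}{\sigma}\ +\ O\!\left(\sqrt{dT\ln T}\right).
\]

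Finally, I optimize over $\epsilon$: choosing $\epsilon=\sigma/T$ makes the first term $O(1)$ while the Hedge term becomes $O(\sqrt{Td\ln(T/\sigma)})$, which dominates the uniform convergence term and gives the claimed regret bound $O(\sqrt{dT\ln(T/\sigma)})$. The main obstacle, and the step most in need of care, is the uniform convergence bound: one must confirm that the symmetrization argument for independent-but-not-identically-distributed samples does yield the VC-type rate $\sqrt{dT\ln T}$ without picking up extra factors of $T$ or $|\H|$ that would destroy the bound. This is classical but not quite off-the-shelf and is the only nontrivial technical component of the proof.
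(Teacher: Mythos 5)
Your proof follows the paper's own sketch (and the cited prior work) exactly: the same decomposition into a Hedge term over a uniform $\epsilon$-cover plus an approximation term, the same use of $\sigma$-smoothness to bound the expected value of each symmetric difference by $T\epsilon/\sigma$, the same appeal to VC-type uniform convergence via symmetrization (which indeed needs only independence, not identical distributions), and the same final choice $\epsilon \asymp \sigma/T$. This matches the paper's argument; the one step you flag as needing care --- uniform convergence for independent but non-i.i.d.\ draws --- is exactly what the paper also notes goes through unchanged, since the ghost-sample coordinate swaps only require that $x_t$ and $x_t'$ share the distribution $\D_t$.
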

\paragraph{Adaptive Adversaries.}
Moving back to the case of adaptive adversaries, we unfortunately lose this uniform convergence property (see \hyperref[app:ExampleUC]{Appendix \ref*{app:ExampleUC}} for an example).
This is due to the fact that now the choice of $\D_t$ can depend on the earlier realization of instances $x_1, \dots, x_{t-1}$.
To see why independence is essential, note that the ubiquitous double sampling and symmetrization techniques used in VC theory require that taking two sets of samples $\vec x$ and $\vec x'$ from the process that is generating data, we can swap $x_i$ and $x'_i$ independently of whether $x_j$ and $x'_j$ are swapped for $j\neq i$. When the choice of $\D_t$ depends on $x_1, \dots, x_{t-1}$ then swapping $x_\tau$ with $x'_\tau$ affects whether $x_t$ and $x'_t$ could even be generated from $\D_t$ for $t>\tau$. 
In other words, symmetrizing the first $t$ variables generates $2^t$ possible choices for $x^{t+1}$ that exponentially increases the set of samples over which a VC class has to be projected, therefore losing the typical $\sqrt{T \cdot\vc(\H)}$ regret bound and instead obtaining the trivial regret of $O(T)$. Nevertheless, we show that the earlier ideas for bounding the second term of \autoref{eq:regretdecomposition} are  still relevant as long as we can side step the need for independence. 

Note that $\sigma$-smoothness of the distributions still implies that for a fixed function $f_{h,h'}$ even though $\D_t$ is dependent on the realizations $x_1, \dots, x_{t-1}$, we still have $\Pr_{x_t\sim \D_t}[f_{h,h'}(x_t)]\leq \epsilon/\sigma$.
Indeed,  the value of any function $f$ for which $\E_{\U}[f(x)] \leq \epsilon$ can be bounded by the convergence property of an appropriately chosen Bernoulli variable. As we demonstrate in the following lemma, this allows us to bound the expected maximum value of a $f_{h,h'}$ \emph{chosen from a finite set of symmetric differences.}  For a proof of this lemma refer to \hyperref[app:Proofoffinite]{Appendix \ref*{app:Proofoffinite}}.

\begin{lemma}
\label{lem:union-ada}
Let $\F: \X\rightarrow \{0,1\}$ be any finite class of functions such that $\E_{\U}[f(x)]\leq \epsilon$ for all $f\in \F$, i.e., every function has measure $\epsilon$ over the uniform distribution.
Let $\adist$ be any adaptive sequence of $T$, $\sigma$-smooth distributions for some $\sigma\geq \epsilon$ such that $T\frac\epsilon\sigma \geq \sqrt{\ln(|\F|)}$. We have that
\[ 
\E_{\vec x \sim \adist} \left[\max_{f\in \F}\sum_{t=1}^T f(x_t) \right] \leq O\left(T\frac{\epsilon}{\sigma} \sqrt{ \ln(|\F|)} \right).
\]
\end{lemma}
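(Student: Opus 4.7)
The plan is to replace the standard VC/uniform-convergence argument (which breaks down here because adaptivity destroys the symmetrization trick) with a martingale-style Chernoff bound that uses smoothness only through a one-step control on the conditional probability that $f(x_t)=1$. The first step is to note that for any fixed $f\in\F$ and any $t$, conditional on the past $x_1,\ldots,x_{t-1}$, the adversary's distribution $\D_t$ is determined and $\sigma$-smooth, so
\[
\E[f(x_t)\mid x_1,\ldots,x_{t-1}]
= \E_{x\sim \D_t}[f(x)]
\le \sigma^{-1}\,\E_{x\sim \U}[f(x)]
\le \epsilon/\sigma \quad \text{a.s.}
\]
This is the adaptive-setting substitute for the i.i.d.\ assumption that drove the non-adaptive proof.

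Second, since $f$ is $\{0,1\}$-valued, for any $\lambda\ge 0$ we have $e^{\lambda f(x_t)}=1+f(x_t)(e^{\lambda}-1)$, so combining with the previous step and $1+y\le e^y$ gives
\[
\E\!\left[e^{\lambda f(x_t)}\mid x_1,\ldots,x_{t-1}\right]
\le \exp\!\bigl((\epsilon/\sigma)(e^{\lambda}-1)\bigr).
\]
Iterating the tower property in $t$ yields, for $S_f:=\sum_{t=1}^T f(x_t)$, the MGF bound $\E[e^{\lambda S_f}]\le \exp\!\bigl(T(\epsilon/\sigma)(e^{\lambda}-1)\bigr)$. Applying Jensen's inequality and a union bound over $\F$ then produces
\[
\exp\!\bigl(\lambda\,\E[\textstyle\max_{f\in\F} S_f]\bigr)
\le \E\!\left[\textstyle\max_{f\in\F} e^{\lambda S_f}\right]
\le |\F|\exp\!\bigl(T(\epsilon/\sigma)(e^{\lambda}-1)\bigr),
\]
hence $\E[\max_f S_f]\le \ln|\F|/\lambda + (T\epsilon/\sigma)\cdot(e^{\lambda}-1)/\lambda$.

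Finally, I would choose $\lambda=\sqrt{\ln|\F|}\,/\,(T\epsilon/\sigma)$. The hypothesis $T\epsilon/\sigma\ge\sqrt{\ln|\F|}$ ensures $\lambda\le 1$, so $e^{\lambda}-1\le 2\lambda$, and the right-hand side becomes $(T\epsilon/\sigma)\sqrt{\ln|\F|}+2(T\epsilon/\sigma)=O\!\bigl((T\epsilon/\sigma)\sqrt{\ln|\F|}\bigr)$, matching the lemma. The main subtlety I anticipate is the conditional-expectation step: one must use that an adaptive adversary's $\D_t$ is measurable with respect to the filtration generated by $x_1,\ldots,x_{t-1}$, so that conditioning on the past freezes $\D_t$ into a fixed $\sigma$-smooth distribution and the pointwise density bound $p_{\D_t}\le u/\sigma$ passes through expectations; this is exactly the structural fact that symmetrization needed but could not exploit, and once it is in place the remainder is a textbook multiplicative Chernoff argument repackaged for a martingale difference sequence.
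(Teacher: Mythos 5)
Your proposal is correct and takes essentially the same route as the paper: exponential tilting, a union bound over $\F$, a Chernoff-type MGF bound driven by the per-step smoothness inequality $\Pr_{x_t\sim\D_t}[f(x_t)=1]\le\epsilon/\sigma$, and the choice $\lambda=\sqrt{\ln|\F|}/(T\epsilon/\sigma)$ together with the hypothesis $T\epsilon/\sigma\ge\sqrt{\ln|\F|}$ to linearize $e^\lambda-1$. The only (cosmetic) difference is that the paper invokes stochastic domination by $\mathrm{Bin}(T,\epsilon/\sigma)$ and computes that binomial's MGF, whereas you derive the same MGF bound directly by iterating the conditional bound $\E[e^{\lambda f(x_t)}\mid\text{past}]\le\exp((\epsilon/\sigma)(e^\lambda-1))$ through the tower property --- a slightly more explicit rendering of the same martingale Chernoff argument.
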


The set of symmetric differences $\G = \{f_{h, h'}\mid \forall h\in \H \text{ and the corresponding proxy } h'\in \H' \}$ we work with is of course infinitely large.  Therefore, to apply \hyperref[lem:union-ada]{Lemma \ref*{lem:union-ada}} we have to approximate $\G$ with a finite set $\F$ such that 
\begin{equation}\label{eq:need-bracket}
\E_{\vec x \sim \adist}\left[\max_{f_{h,h'}\in \G} \sum_{t=1}^T f_{h,h'}(x_t) \right] \lesssim \E_{\vec x \sim\adist}\left[\max_{f\in \F} \sum_{t=1}^T f(x_t)  \right].
\end{equation}
What should this set $\F$ be? Note that choosing $\F$ that is an $\epsilon$-cover of $\G$ under the uniform distribution is an ineffective attempt plagued by the the same lack of independence that we are trying to side step.
In fact, while all functions $f_{h,h'}$ are $\epsilon$ close to the constant $0$ functions with respect to the uniform distribution, they are activated on different parts of the domain. So it is not clear that an adaptive adversary, who can see the earlier realizations of instances, cannot ensure that one of these regions will receive a large number realized instances.
But a second look at \autoref{eq:need-bracket} suffices to see that this is precisely what we can obtain if $\F$ were to be the set of (upper) functions in an $\epsilon$-bracketing of $\G$.
That is,  for every function $f_{h ,h'}\in \G$ there is a function $f \in \F$ such that $f_{h,h'}\preceq f$. This proves  \autoref{eq:need-bracket} with an exact inequality using the fact that  pointwise approximation $f_{h,h'}\preceq f$ implies that the value of $f_{h,h'}$ is bounded by that of $f$ for any set of instances $x_1, \dots, x_T$ that could be generated by $\adist$. Furthermore, functions in $\G$ are 
within $\epsilon$ of the constant $0$ function over the uniform distribution, so $\F$  meets the criteria of \hyperref[lem:union-ada]{Lemma \ref*{lem:union-ada}} with the property that for all $f\in \F$, $\E_\U[f(x)]\leq \epsilon$.
It remains to bound the size of class $|\F|$ in terms of the bracketing number of $\H$. This can be done by showing that the bracketing number of class $\G$, that is the class of all  symmetric differences in $\H$, is approximately bounded by the same bracketing number of $\H$ (See \autoref{thm:k-fold} for more details).
Putting these all together we get the following regret bound against smoothed adaptive adversaries.

\begin{theorem}[Adaptive Adversary]
\label{thm:main-ada}
Let $\H$ be a hypothesis class over domain $\X$, whose $\epsilon$-bracketing number with respect to the uniform distribution over $\X$ is denoted by $\bnumber{\H}{\U}{\epsilon}$. There is an algorithm such that for any  $\adist$ that is an adaptive sequence of $\sigma$-smooth distributions has regret
\[    \E[\regret(\A,\adist)] \in
	 O\left( \sqrt{T \ln\left( \bnumber{\H}{\U}{ \frac{\sigma}{4\sqrt T}} \right)} 
	 \right).
\]
\end{theorem}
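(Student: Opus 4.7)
The plan is to instantiate the regret decomposition in equation~\eqref{eq:regretdecomposition} with a proxy class $\H'$ of size $\bnumber{\H}{\U}{\epsilon}$ for the choice $\epsilon = \sigma/(4\sqrt{T})$, and then use Lemma~\ref{lem:union-ada} together with the pointwise domination structure of brackets to control the approximation term. Both contributions will turn out to be $O(\sqrt{T \ln N})$ with $N := \bnumber{\H}{\U}{\epsilon}$, which yields the stated bound.

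Concretely, let $\mathcal{B}$ be an $\epsilon$-bracketing of $\H$ under $\U$ of minimum size $N$; for each bracket $[f_1, f_2] \in \mathcal{B}$, pick any $h' \in \H$ with $f_1 \preceq h' \preceq f_2$ and collect these into $\H'$, so that $|\H'| \leq N$. Running Hedge on $\H'$ immediately gives the first term of \eqref{eq:regretdecomposition} as $O(\sqrt{T \ln N})$. For the second term, assign each $h \in \H$ to a bracket $[f_1, f_2]$ containing it, with representative $h' \in \H'$: since both $h$ and $h'$ lie between $f_1$ and $f_2$, any $x$ with $h(x) \neq h'(x)$ forces $f_1(x) \neq f_2(x)$. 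Defining the gap functions $g_{[f_1,f_2]}(x) := \mathbf{1}(f_1(x) \neq f_2(x))$ and $\F := \{g_{[f_1,f_2]} : [f_1, f_2] \in \mathcal{B}\}$, we obtain the pointwise bound $\mathbf{1}(h(x) \neq h'(x)) \leq g_{[f_1,f_2]}(x)$, and hence the approximation term is at most $\E_{\vec x \sim \adist}\bigl[\max_{g \in \F} \sum_{t=1}^T g(x_t)\bigr]$. Crucially, $|\F| \leq N$ and each $g \in \F$ satisfies $\E_\U[g] \leq \epsilon$ by definition of an $\epsilon$-bracket, so Lemma~\ref{lem:union-ada} bounds this quantity by $O\bigl(T(\epsilon/\sigma)\sqrt{\ln N}\bigr) = O(\sqrt{T\ln N})$ after substituting $\epsilon/\sigma = 1/(4\sqrt{T})$.

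The conceptual crux, and the reason a mere $L^1$-cover of $\H$ under $\U$ would not suffice, is that for an adaptive adversary the $x_t$'s are dependent, so a uniform-convergence argument over an ordinary cover does not go through; the pointwise domination replaces the supremum over the infinite set of symmetric differences by a supremum over the finite family $\F$ via a \emph{deterministic} inequality, eliminating any need for independence among the $x_t$'s. The one remaining technicality is the side condition $T\epsilon/\sigma \geq \sqrt{\ln |\F|}$ of Lemma~\ref{lem:union-ada}, which with our parameter choice translates to $T \gtrsim \ln N$. In the complementary regime the target bound $\sqrt{T \ln N}$ already exceeds the trivial regret upper bound $T$, so a short case split handles it. The main obstacle is therefore not any single estimate but the conceptual step of identifying bracketing, rather than covering, as the right refinement that survives the loss of independence under adaptive smoothed processes.
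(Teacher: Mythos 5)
Your proof is correct, and it takes a genuinely cleaner route than the paper's. The paper constructs $\H'$ as an $\epsilon/2$-net (drawn from a bracketing), forms the class $\G$ of symmetric differences between $h\in\H$ and its proxy, and then invokes Theorem~\ref{thm:k-fold} to bound the bracketing number of $\G$ --- incurring a fourth-power blowup $\bnumber{\H}{\U}{\epsilon/4}^4$ --- before applying Lemma~\ref{lem:union-ada} to the upper brackets of $\G$. You bypass all of that: by choosing each $h'\in\H'$ to be a representative lying inside its bracket $[f_1,f_2]$, you observe that $\mathbf{1}(h(x)\neq h'(x))\leq\mathbf{1}(f_1(x)\neq f_2(x))$ holds pointwise whenever $h$ and $h'$ share a bracket, so the finite family $\F$ to which Lemma~\ref{lem:union-ada} applies is simply the set of bracket-width indicators, of size $N=\bnumber{\H}{\U}{\epsilon}$ and uniform measure at most $\epsilon$ each. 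This is a more direct exploitation of the bracketing structure, avoids the symmetric-difference machinery of Theorem~\ref{thm:k-fold} entirely, and saves a (harmless) constant factor. Your explicit case split on the side condition $T\epsilon/\sigma\geq\sqrt{\ln|\F|}$ against the trivial regret bound $T$ is also a nice touch that the paper leaves implicit. One small point worth stating: brackets in $\mathcal{B}$ containing no element of $\H$ can be discarded without harming coverage, so a representative $h'\in\H$ always exists and $\H'\subseteq\H$ as needed.
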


\subsection{Hypothesis Classes with Small Bracketing Numbers.}

\label{sec:bracket-eg}

In this section, we analyze bracketing numbers of some commonly used hypothesis classes in machine learning. 
We start by reviewing the bracketing number of halfspaces and provide two systematic approaches for extending this bound to other commonly used hypothesis classes.
Our first approach bounds the bracketing number of any class using the dimension of the space needed to embed it as halfspaces. 
Our second approach shows that $k$-fold operations on any hypothesis class, such as taking the class of intersections or unions of all $k$ hypotheses in a class, only mildly increase the bracketing number. 
Combining these two techniques allows us to bound the bracketing number of commonly used classifiers such as halfspaces, polytopes, polynomial threshold functions, etc.

The connection between bracketing number and VC theory has been explored in recent works. \citet{adams2010,adams2012} showed that  finite VC dimension class also have finite $\epsilon$-bracketing number but \cite{AlonWezlHaussler} (see \cite{UGC} for a modern presentation) showed the dependence on $1/\epsilon$ can be arbitrarily bad.
Since \autoref{thm:main-ada} depends on the growth rate of bracketing numbers, we work with classes for which we can obtain $\epsilon$-bracketing numbers with reasonable growth rate, those that are close to the size of  standard $\epsilon$-covers. 

  \begin{theorem}[\cite{Bracket_Halfspaces}]\label{thm:half_brack}
  Let $\H$ be the class of halfspaces over $\br^d$. For any $\epsilon >0$ and any measure $\mu$ over $\br^d$,
$           \bnumber{\H}{\mu}{\epsilon} \leq \left( \frac{d}{\epsilon} \right)^{ O\left(   d \right) }.$
     \end{theorem}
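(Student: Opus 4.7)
The plan is to construct an explicit $\epsilon$-bracketing of halfspaces in $\br^d$ whose size matches $(d/\epsilon)^{O(d)}$. Parameterize each halfspace as $h_{w,b}(x) = \sign(\langle w, x\rangle - b)$ with $\|w\|=1$, so the class lives in the parameter space $S^{d-1} \times \br$. A naive product discretization of this space fails, because $\mu$ may place arbitrarily large mass far from the origin, so even tiny angular perturbations of $w$ can move unbounded $\mu$-mass across the hyperplane boundary. The construction must therefore be measure-aware.

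The first step is to handle a single direction. For each $w\in S^{d-1}$, let $\mu_w$ denote the pushforward of $\mu$ under $x\mapsto \langle w,x\rangle$, and choose threshold offsets $b_{w,0}<b_{w,1}<\cdots<b_{w,M}$ at $\epsilon/3$-quantiles of $\mu_w$, so that consecutive halfspaces with normal $w$ differ by $\mu$-measure at most $\epsilon/3$ and $M=O(1/\epsilon)$. For halfspaces with normal exactly equal to $w$, the pairs $[h_{w,b_{w,k+1}}, h_{w,b_{w,k}}]$ already form a one-dimensional $\epsilon$-bracketing. The second step is to combine directions. Take an angular net $\N \subset S^{d-1}$ at resolution $\eta$ chosen polynomially small in $\epsilon/d$, so $|\N|=(d/\epsilon)^{O(d)}$ by standard sphere-covering bounds. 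For every halfspace $h_{w,b}$, associate it with its nearest net direction $\tilde w \in \N$ and select quantile offsets $b^- < b^+$ for $\tilde w$ such that $h_{\tilde w,b^+}\preceq h_{w,b}\preceq h_{\tilde w,b^-}$.

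The main obstacle is bounding the $\mu$-measure of the symmetric difference $\{h_{\tilde w,b^+}\neq h_{\tilde w,b^-}\}$ by $\epsilon$. The angular perturbation contributes a wedge-shaped slab $\{x:|\langle w-\tilde w, x\rangle|\gtrsim |b-b^\pm|\}$ whose measure is not automatically small for arbitrary $\mu$. To handle this, I would truncate: discard a set of $\mu$-tails of total mass $\epsilon/2$ (e.g.\ the complement of a large enough quantile region in each net direction), reducing to a set on which the relevant projections lie in an interval of bounded quantile-width. On this truncated set the wedge induced by $\eta$-perturbations is absorbed by advancing by a single quantile step, while the discarded tail is folded into the bracket slack. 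Counting then gives $|\N|\cdot O(1/\epsilon)^{2}=(d/\epsilon)^{O(d)}$ brackets, matching the claim. The delicate point throughout is the coupling between the angular discretization $\eta$ and the $\mu$-adapted quantile spacing, which is where the bulk of the technical work concentrates.
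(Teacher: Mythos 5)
This theorem is cited from an external reference (\cite{Bracket_Halfspaces}); the paper itself gives no proof, so there is no internal argument to compare against. Evaluating your proposal on its own terms, there is a genuine gap at the heart of the construction.

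The problem is your second step, where you take a measure-independent angular net $\mathcal{N}\subset S^{d-1}$ and, for a halfspace $h_{w,b}$ with $w\notin\mathcal{N}$, try to sandwich it pointwise by halfspaces with the nearest net normal $\tilde w$. Two halfspaces with distinct normals are never comparable under $\preceq$: for any offsets $b^{\pm}$ there exist $x$ with $\langle w,x\rangle\geq b$ but $\langle\tilde w,x\rangle$ arbitrarily negative, so $h_{\tilde w,b^{+}}\preceq h_{w,b}\preceq h_{\tilde w,b^{-}}$ never holds. You acknowledge this via the ``wedge'' discussion and propose truncation as the repair, but truncation does not fix it, because the failure already occurs on compactly supported measures where there are no tails to discard. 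Concretely, take $d=2$ and let $\mu$ be uniform on the segment $\{0\}\times[-1,1]$. Suppose $\tilde w=(1,0)\in\mathcal{N}$ and $w=(\cos\delta,\sin\delta)$ with $0<\delta$ smaller than the net resolution. Every halfspace with normal $\tilde w$ is constant on the support of $\mu$ (its value is $\sign(-b')$), whereas $h_{w,0}$ restricted to the support equals $\sign(t)$, which is non-constant. Any bracket built from halfspaces in direction $\tilde w$ that contains $h_{w,0}$ must therefore satisfy $f_1\equiv -1$ and $f_2\equiv +1$ on the support, so $\mu(f_1\neq f_2)=1$. The support is already compact, and the quantile-width of the projection onto $\tilde w$ is already zero, so the truncation step has nothing to remove. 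What fails is the implicit assumption that small angular distance between normals implies small $\mu$-distance between the corresponding one-parameter families of halfspaces; for arbitrary $\mu$ there is no such uniform relation. A correct proof must either let the net of normal directions adapt to $\mu$, or use a structurally different decomposition, and that is precisely where the real difficulty of the cited theorem lies.
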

Our first technique uses this property of halfspaces to bound the bracketing number of any hypothesis class as a function of the dimension of the spaces needed to embed this class as halfpsaces. 

    \begin{definition}[Embeddable Classes]
        Let $\mathcal{G}$ be a hypothesis class on $ \mathcal{X} $. 
        We say that $ \mathcal{G} $ is embeddable as halfspaces in $m$ dimensions if there exists a map $ \psi : \mathcal{X} \to \br^m $ such that for any $g \in \mathcal{G} $, there is a linear threshold function $h$ such $ g = h \circ \psi $.  
    \end{definition}

    \begin{theorem}[Bracketing Number of Embeddable Classes]
    \label{thm:embeddable}
        Let $\mathcal{G}$ be a hypothesis class embeddable as halfspaces in $m$ dimensions. 
        Then, for any measure $\nu$,
$            \bnumber{ \mathcal{G} }{\nu}{\epsilon} \leq \left(  \frac{m}{\epsilon} \right) ^{O\left( m \right)}. 
$
    \end{theorem}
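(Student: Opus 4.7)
The plan is to reduce directly to the halfspace bracketing bound of \hyperref[thm:half_brack]{Theorem~\ref*{thm:half_brack}} by a pushforward argument along the embedding $\psi$. Let $\mu = \psi_*\nu$ denote the pushforward measure on $\br^m$, defined by $\mu(A) = \nu(\psi^{-1}(A))$ for measurable $A \subseteq \br^m$. By \hyperref[thm:half_brack]{Theorem~\ref*{thm:half_brack}}, there exists an $\epsilon$-bracketing $\bnet$ of the class of halfspaces on $\br^m$ with respect to $\mu$ satisfying $|\bnet| \leq (m/\epsilon)^{O(m)}$, and this will be the only nontrivial ingredient I use.

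Given such a bracketing $\bnet$, I would construct a bracketing $\bnet'$ for $\mathcal{G}$ by pulling back along $\psi$: for each bracket $[h_1, h_2] \in \bnet$, include the bracket $[h_1 \circ \psi, h_2 \circ \psi]$ in $\bnet'$. Clearly $h_1 \circ \psi$ and $h_2 \circ \psi$ are $\{-1,+1\}$-valued functions on $\X$, and since $\psi$ is a fixed map, $h_1 \preceq h_2$ on $\br^m$ immediately gives $h_1 \circ \psi \preceq h_2 \circ \psi$ on $\X$.

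Next I would verify the two defining properties of a bracketing. For the covering property: any $g \in \mathcal{G}$ equals $h \circ \psi$ for some halfspace $h$, and since $\bnet$ is a bracketing of halfspaces there is a bracket $[h_1, h_2] \in \bnet$ with $h_1 \preceq h \preceq h_2$; composing with $\psi$ yields $h_1 \circ \psi \preceq g \preceq h_2 \circ \psi$, so $g$ lies in the corresponding bracket in $\bnet'$. For the $\epsilon$-bracket property: by the change of variables,
\[
\Pr_{x \sim \nu}\!\left[h_1(\psi(x)) \neq h_2(\psi(x))\right] = \Pr_{z \sim \mu}\!\left[h_1(z) \neq h_2(z)\right] \leq \epsilon,
\]
where the last inequality uses that $[h_1,h_2]$ is an $\epsilon$-bracket under $\mu$.

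Hence $\bnet'$ is an $\epsilon$-bracketing of $\mathcal{G}$ with respect to $\nu$ and $|\bnet'| \leq |\bnet| \leq (m/\epsilon)^{O(m)}$, which is the claimed bound. There is no real obstacle here, since the embedding assumption is exactly what lets us push covers/brackets back and forth between $\X$ and $\br^m$ without loss; the only mild point to spell out is that the bracketing functions $h_1, h_2 \in \bnet$ are not required to be halfspaces themselves (only their sandwich class is), so composing them with $\psi$ produces valid $\{-1,+1\}$-valued functions on $\X$ with no additional structural requirements.
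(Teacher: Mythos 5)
Your argument is correct and is essentially identical to the paper's proof: both define $\mu$ as the pushforward $\psi_*\nu$, apply \hyperref[thm:half_brack]{Theorem~\ref*{thm:half_brack}} to halfspaces under $\mu$, pull back the brackets by composing with $\psi$, and verify the covering and $\epsilon$-bracket properties via the change-of-variables identity. The only difference is cosmetic: the paper describes $\mu$ operationally ("sample $x\sim\nu$ and set $z=\psi(x)$") while you name it the pushforward measure.
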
    
Our second technique shows that combining $k$ classes, by respectively taking intersections or unions of any $k$ functions from them, only mildly increases their bracketing number.
\begin{theorem}[Bracketing Number of $k$-fold Operations]
\label{thm:k-fold}
    Let $\F_1, \dots, \F_k$ be $k$ hypothesis classes. Let $\F_1\cdot \F_2\cdots \F_k$ and $\F_1+\F_2 + \cdots + \F_k$ be the class of all hypotheses that are  intersections and  unions of $k$ functions $f_i\in \F_i$, respectively. Then, 
\[  \bnumber{\F_1\cdot \F_2 \cdots \F_k}{\mu}{k \epsilon} \leq \prod_{i\in [k]}     \bnumber{\F_i}{\mu}{\epsilon}  \]
and
\[  \bnumber{\F_1+\F_2 +\cdots + \F_k}{\mu}{k\epsilon} \leq \prod_{i\in [k]}  \bnumber{\F_i}{\mu}{\epsilon}. \]
For any hypothesis class $\F$ and $\G = \{ f\Delta f' \mid \text{ for all} f, f'\in \F\}$,
$  \bnumber{\G}{\mu}{4 \epsilon} \leq \left( \bnumber{\F}{\mu}{\epsilon} \right)^4.$
\end{theorem}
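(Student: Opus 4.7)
The plan is to build brackets for the composite classes by combining brackets of the component classes pointwise, using monotonicity of the Boolean operations and a union bound on disagreement probabilities.

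For the intersection bound, fix an optimal $\epsilon$-bracketing $\B_i$ of $\F_i$ of size $\bnumber{\F_i}{\mu}{\epsilon}$ for each $i$. Given any tuple $(f_1,\dots,f_k)\in\F_1\times\cdots\times\F_k$, choose $[\ell_i,u_i]\in\B_i$ with $\ell_i\preceq f_i\preceq u_i$, and form the candidate bracket $\bigl[\bigwedge_i \ell_i,\ \bigwedge_i u_i\bigr]$, where $\wedge$ is the pointwise conjunction of $\{-1,+1\}$-valued functions (equal to $+1$ at $x$ iff every argument is $+1$ at $x$). Monotonicity of $\wedge$ in $\preceq$ yields $\bigwedge_i \ell_i \preceq \bigwedge_i f_i \preceq \bigwedge_i u_i$, so every member of $\F_1\cdot\F_2\cdots\F_k$ falls in one of these brackets. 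For the tolerance, note that $(\bigwedge_i \ell_i)(x)\neq(\bigwedge_i u_i)(x)$ forces $\ell_i(x)\neq u_i(x)$ for at least one $i$, so a union bound gives $\Pr_{x\sim\mu}[\text{disagreement}]\leq k\epsilon$; the number of such brackets is at most $\prod_i \bnumber{\F_i}{\mu}{\epsilon}$. The union class $\F_1+\cdots+\F_k$ is handled identically, with $\wedge$ replaced by the pointwise disjunction $\vee$, which is likewise monotone in $\preceq$ and enjoys the same disagreement union bound.

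For the symmetric difference bound, I decompose $f\Delta f' = (f\wedge \neg f')\vee(\neg f\wedge f')$. Writing $\bar\F=\{-f:f\in\F\}$, the bracketing $\B$ of $\F$ induces an $\epsilon$-bracketing of $\bar\F$ of the same size via $[\ell,u]\mapsto[-u,-\ell]$, since negation reverses $\preceq$ and preserves the disagreement probability. Applying the intersection result with $k=2$ then gives a $2\epsilon$-bracketing of $\F\cdot\bar\F$ of size at most $\bnumber{\F}{\mu}{\epsilon}^2$, and likewise for $\bar\F\cdot\F$. A second application, this time of the union result with $k=2$ to these two classes, yields a $4\epsilon$-bracketing of $\G$ of size at most $\bigl(\bnumber{\F}{\mu}{\epsilon}^2\bigr)^2 = \bnumber{\F}{\mu}{\epsilon}^4$, as desired.

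The main obstacle is bookkeeping around the partial order $\preceq$ on $\{-1,+1\}$-valued functions: one must verify that $\wedge$ and $\vee$ are each monotone in both arguments under $\preceq$, that negation reverses $\preceq$ and hence swaps the endpoints of a bracket while preserving its size, and that the disagreement events for the composite endpoints reduce cleanly to a union over the component disagreements. Once these elementary facts are nailed down, all three parts follow by the same pattern, and the symmetric difference bound is merely the composition of the intersection and union results at $k=2$.
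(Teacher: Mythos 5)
Your proof is correct and follows essentially the same approach as the paper: combine component brackets pointwise under $\wedge$ or $\vee$, control the disagreement mass of the composite bracket by a union bound over per-coordinate disagreements, and then derive the symmetric-difference bound by two applications of the $k=2$ cases together with the observation that complementation preserves bracketing numbers (by flipping a bracket's endpoints). The only cosmetic difference is the algebraic decomposition of $f\Delta f'$ — you use $(f\wedge\neg f')\vee(\neg f\wedge f')$ while the paper uses $(f\cup f')\cap\overline{(f\cap f')}$ — but both lead to two applications at $k=2$ and the same $\epsilon\mapsto 4\epsilon$, size $\mapsto$ size${}^4$ conclusion; your direct union-bound argument for general $k$ is a touch cleaner than the paper's inductive presentation but is not a genuinely different route.
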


We now use our techniques for bounding the bracketing number of complex classes by the bracketing number of their simpler building blocks to show that online learning with an adaptive adversary on a class of halfspaces, polytopes, and polynomial threshold functions has $\tilde O(\sqrt{T\ \vc(\H)})$ regret.

\begin{corollary}
\label{cor:bnumber}
Consider  instance space $\X = \br^n$ and let $\mu$ be an arbitrary measure on $\X$. 
Let $\mathcal{P}^{n,d}$ be the class of $d$-degree \emph{polynomial thresholds} and $\mathcal{Q}^{n,k}$ be the class $k$-polytopes in $\br^n$. Then,
\[
\bnumber{\mathcal{P}^{n,d}}{\mu}{\epsilon}  \leq \exp\left(c_1 n^d \ln\left( n^d/\epsilon \right) \right)
 \text{ and }
\bnumber{\mathcal{Q}^{n, k}}{\mu}{\epsilon}\leq   \exp\left(c_2 nk \ln\left( \frac{nk}{\epsilon} \right)  \right),
\]
for some constants $c_1$ and $c_2$. Furthermore, there is an online algorithm whose regret against an adaptive $\sigma$-smoothed adversary on the class 
$\mathcal{P}^{n,d}$
and $\mathcal{Q}^{n, k}$ is respectively $\tilde O(\sqrt{T \cdot \vc(\mathcal{P}^{n,d})\ln(1/\sigma)} )$ and $\tilde O(\sqrt{T \cdot \vc(\mathcal{Q}^{n,k}) \ln(1/\sigma)} )$.
\end{corollary}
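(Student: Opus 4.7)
The plan is to derive both bracketing estimates as direct applications of the two general techniques developed earlier (Theorems \ref{thm:embeddable} and \ref{thm:k-fold}), and then feed the results into Theorem \ref{thm:main-ada} with $\epsilon = \sigma/(4\sqrt{T})$.

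For the polynomial threshold class $\mathcal{P}^{n,d}$, I would use the standard Veronese-type linearization: define $\psi : \br^n \to \br^m$ with $m = \binom{n+d}{d} = O(n^d)$ by sending $x$ to the vector of all monomials of degree at most $d$. Every degree-$d$ polynomial threshold on $\br^n$ is then the composition of a halfspace in $\br^m$ with $\psi$, so $\mathcal{P}^{n,d}$ is embeddable as halfspaces in $m$ dimensions. Theorem \ref{thm:embeddable} immediately gives $\bnumber{\mathcal{P}^{n,d}}{\mu}{\epsilon} \leq (m/\epsilon)^{O(m)} = \exp\bigl(c_1 n^d \ln(n^d/\epsilon)\bigr)$ for some absolute constant $c_1$.

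For the $k$-polytope class $\mathcal{Q}^{n,k}$, I would write each polytope as an intersection of $k$ halfspaces in $\br^n$, i.e.\ $\mathcal{Q}^{n,k} \subseteq \H \cdot \H \cdots \H$ ($k$ factors), where $\H$ is the class of halfspaces. Theorem \ref{thm:half_brack} gives $\bnumber{\H}{\mu}{\epsilon'} \leq (n/\epsilon')^{O(n)}$, and Theorem \ref{thm:k-fold} (the $k$-fold intersection bound) then yields $\bnumber{\mathcal{Q}^{n,k}}{\mu}{k\epsilon'} \leq (n/\epsilon')^{O(nk)}$. Rescaling $\epsilon = k\epsilon'$ and absorbing the $\log k$ factor into the exponent produces the claimed bound $\exp\bigl(c_2 nk \ln(nk/\epsilon)\bigr)$.

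To get the regret bounds, I would plug $\epsilon = \sigma/(4\sqrt{T})$ into each estimate and apply Theorem \ref{thm:main-ada}. This gives
\[
\ln \bnumber{\mathcal{P}^{n,d}}{\U}{\sigma/(4\sqrt T)} = O\bigl(n^d \ln(n^d \sqrt{T}/\sigma)\bigr),
\]
and an analogous expression with $nk$ in place of $n^d$ for $\mathcal{Q}^{n,k}$. Using the known VC-dimension estimates $\vc(\mathcal{P}^{n,d}) = \Theta(n^d)$ and $\vc(\mathcal{Q}^{n,k}) = \tilde\Theta(nk)$, the regret bounds of Theorem \ref{thm:main-ada} become $\tilde O\bigl(\sqrt{T \cdot \vc(\mathcal{P}^{n,d})\ln(1/\sigma)}\bigr)$ and $\tilde O\bigl(\sqrt{T \cdot \vc(\mathcal{Q}^{n,k})\ln(1/\sigma)}\bigr)$ respectively, after absorbing the $\ln T$ and $\ln n$ factors into the $\tilde O(\cdot)$.

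None of the individual steps should be hard, since the heavy lifting has been done in Theorems \ref{thm:half_brack}, \ref{thm:embeddable}, \ref{thm:k-fold}, and \ref{thm:main-ada}. The only subtle point I expect is bookkeeping: making sure the $k$-fold bound is applied at scale $\epsilon/k$ so the final bracket width is $\epsilon$, and that the $\log T$ and $\log n$ factors coming from $\epsilon = \sigma/(4\sqrt{T})$ and from $m = \binom{n+d}{d}$ are correctly hidden inside the $\tilde O$ notation so that only $\ln(1/\sigma)$ is displayed. This is routine but worth doing explicitly.
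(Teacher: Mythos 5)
Your proposal is correct and follows essentially the same route as the paper: embed $\mathcal{P}^{n,d}$ as halfspaces in $O(n^d)$ dimensions via the monomial map and apply Theorem~\ref{thm:embeddable}, express $\mathcal{Q}^{n,k}$ as a $k$-fold intersection of halfspaces and apply Theorems~\ref{thm:half_brack} and~\ref{thm:k-fold}, then plug $\epsilon = \sigma/(4\sqrt T)$ into Theorem~\ref{thm:main-ada}. Your explicit bookkeeping about applying the $k$-fold bound at scale $\epsilon/k$ and absorbing the $\ln T$, $\ln n$ factors into $\tilde O(\cdot)$ is a slightly more careful rendering of what the paper leaves implicit, but it is the same argument.
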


\section{Differential Privacy} \label{sec:DP}
In this section, we consider smoothed analysis of differentially private learning in \emph{query answering} and \emph{data release} paradigms.
We primarily focus on $(\sigma , 0)$-smooth distributions and defer the general case of $(\sigma , \chi)$-smooth distributions to \hyperref[app:SmallDB]{Appendix \ref*{app:SmallDB}}.
For finite query classes $\Q$ and small domains, existing differentially private mechanisms achieve an error bound that depends on $\ln(|\Q|)$ and $\ln(|\X|)$. 
We leverage smoothness of data sets to improve these dependencies to $\vc(\Q)$ and $\ln(1/\sigma)$. 

\paragraph{An Existing Algorithm.}
\cite{NIPS2012_4548} introduced a practical algorithm  for data release, called Multiplicative Weights Exponential Mechanism (MWEM). This algorithm works for a finite query class $\Q$ over a finite domain $\X$.
Given an data set $B$ and its corresponding empirical distribution $\D_B$, MWEM iteratively builds distributions $\D_t$ for $t\in[T]$, starting from $\D_1= \U$ that is the uniform distribution over $\X$. 
At stage $t$, the algorithm picks a $q_t \in \Q$ that approximately maximizes the error $\left| q_t\left(\D_{t-1}\right)-q_t(\D_B)\right|$ using a differentially private mechanism (Exponential mechanism). 
Then data set $\D_{t-1}$ is updated using the multiplicative weights update rule 
$\D_t(x) \propto \D_{t-1}(x)  \exp\left( q_t( x) (m_t - q_t(\D_{t-1}))/2 \right)$ 
where $m_t$ is a differentially private estimate (via Laplace mechanism) for the value $q_t \left( \D_B \right)$.  
The output of the mechanism is a data set $ {\overline{\D}} =  \frac 1T \sum_{t\in[T]}\D_t$.
The formal guarantees of the algorithm are as follows.

\begin{theorem}[\cite{NIPS2012_4548}] \label{thm:MWEM}
    For any data set $B$ of size $n$, a finite query class $\Q$, $T \in \mathbb{N}$ and $\epsilon>0$, MWEM is $\epsilon$-differentially private and with probability at least $1 - \nicefrac{2T}{|\mathcal{\Q}|}$ produces a distribution $ {\overline{\D}}$ over $\X$ such that $ \max_{q \in \Q }\left\{   \abs{q\left( {\overline{\D}}\right) - q\left(\D_B \right) } \right\} \leq 2  \sqrt{\frac{\log |\mathcal{X}|}{T}}+\frac{10 T \log |\mathcal{Q}|}{\epsilon n} .$
\end{theorem}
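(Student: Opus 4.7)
The proof splits into a privacy argument and a utility argument, forming the standard analysis of the MWEM mechanism.

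\emph{Privacy.} Each of the $T$ iterations invokes two differentially private primitives, both of sensitivity $1/n$ on $\D_B$: an exponential mechanism that selects $q_t$ with score $|q(\D_{t-1}) - q(\D_B)|$, and a Laplace mechanism releasing $m_t \approx q_t(\D_B)$. Allocating a privacy budget of $\epsilon/T$ per round (split evenly between the two calls, both with sensitivity $1/n$) and applying basic sequential composition across $T$ rounds gives $\epsilon$-differential privacy overall.

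\emph{Utility via a potential function.} Define $\Psi_t = \mathrm{KL}(\D_B \,\|\, \D_t)$. Since $\D_1 = \U$ is uniform over $\X$, we have $\Psi_1 \leq \log |\X|$, and $\Psi_t \geq 0$ always. A direct calculation with the multiplicative weights update $\D_t(x) \propto \D_{t-1}(x)\exp\!\big(q_t(x)(m_t - q_t(\D_{t-1}))/2\big)$ yields the per-step decrease
\[
\Psi_{t-1} - \Psi_t \;\geq\; \tfrac{1}{2}\big(m_t - q_t(\D_{t-1})\big)^2 - \big(m_t - q_t(\D_B)\big)\big(q_t(\D_B) - q_t(\D_{t-1})\big).
\]
Telescoping over $t$ and using $0 \leq \Psi_T \leq \Psi_1 \leq \log|\X|$ bounds $\sum_{t} (m_t - q_t(\D_{t-1}))^2$ by $2\log|\X|$ up to cross-term slack, so on average $|m_t - q_t(\D_{t-1})| = O(\sqrt{\log|\X|/T})$, modulo the Laplace noise in $m_t$.

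\emph{Combining.} A union bound over the $T$ rounds controls both the Laplace tails $|m_t - q_t(\D_B)|$ and the exponential-mechanism selection gap, each contributing $O(T\log|\Q|/(\epsilon n))$ at failure probability at most $2/|\Q|$ per round. The exponential mechanism guarantee certifies that at every step $|q_t(\D_{t-1}) - q_t(\D_B)| \geq \max_{q\in\Q}|q(\D_{t-1}) - q(\D_B)| - O(T\log|\Q|/(\epsilon n))$. Since $\overline{\D} = \tfrac{1}{T}\sum_t \D_t$, convexity of the absolute value gives $\max_{q\in\Q}|q(\overline{\D}) - q(\D_B)| \leq \tfrac{1}{T}\sum_t \max_{q\in\Q}|q(\D_t) - q(\D_B)|$, which combines with the potential bound to yield the stated $2\sqrt{\log|\X|/T} + 10 T\log|\Q|/(\epsilon n)$ guarantee with failure probability at most $2T/|\Q|$.

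The main obstacle is the careful accounting in the potential-drop inequality when $m_t$ is a noisy surrogate for $q_t(\D_B)$: the cross term $(m_t - q_t(\D_B))(q_t(\D_B) - q_t(\D_{t-1}))$ must be controlled so that Laplace noise does not swamp the telescoping argument, and the privacy budget allocation across $T$ rounds has to be tuned to balance the $\sqrt{\log|\X|/T}$ approximation term against the $T\log|\Q|/(\epsilon n)$ noise term.
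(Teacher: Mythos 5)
This theorem is cited by the paper from \cite{NIPS2012_4548} rather than reproved; the paper instead re-uses its machinery, particularly the potential-drop bound (\hyperref[lem:potential_improve]{Lemma \ref*{lem:potential_improve}}) and the exponential/Laplace tail bounds (\hyperref[lem:ExpLapcon]{Lemma \ref*{lem:ExpLapcon}}), inside the proof of \autoref{thm:DPMW}. Your high-level plan follows that same standard route: split privacy budget $\epsilon/(2T)$ over $2T$ mechanism calls; track $\Psi_t = \mathrm{D}_{\mathrm{KL}}(\D_B\|\D_t)$ with $\Psi_0 \leq \log|\X|$ and $\Psi_t \geq 0$; telescope; union-bound the Laplace and exponential-mechanism tails to pick up $O(T\log|\Q|/(\epsilon n))$ error with failure probability $2T/|\Q|$; and use Jensen to pass from the per-step $\D_t$ to $\overline{\D}$.

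However, your stated per-step potential drop is wrong. Writing $B = m_t - q_t(\D_B)$ (Laplace noise) and $C = q_t(\D_B) - q_t(\D_{t-1})$, so that $m_t - q_t(\D_{t-1}) = B + C$, your right-hand side simplifies to
\[
\tfrac{1}{2}(B+C)^2 - BC = \tfrac{1}{2}B^2 + \tfrac{1}{2}C^2 \ \geq\ 0,
\]
which would imply that the potential never increases. That is false: if $m_t$ is badly corrupted by Laplace noise, the multiplicative update pushes $\D_t$ away from $\D_B$ and $\Psi_t$ can go up. The correct inequality, Lemma~A.4 of \cite{NIPS2012_4548} (restated as \hyperref[lem:potential_improve]{Lemma \ref*{lem:potential_improve}} in this paper), has the Laplace error entering with a minus sign:
\[
\Psi_{t-1} - \Psi_t \ \geq\ \Bigl(\tfrac{q_t(\D_{t-1}) - q_t(\D_B)}{2}\Bigr)^{\!2} - \Bigl(\tfrac{m_t - q_t(\D_B)}{2}\Bigr)^{\!2},
\]
i.e., roughly $\tfrac{1}{4}C^2 - \tfrac{1}{4}B^2$. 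This sign matters: after telescoping you bound the sum of squared true errors $C_t^2$ by $4\log|\X|$ plus the sum of squared Laplace errors $B_t^2$, then apply concavity of $\sqrt{\cdot}$ (which your sketch omits but the paper's proof of \autoref{thm:DPMW} uses explicitly). With your version, the noise term has the wrong sign and the subsequent accounting of ``cross-term slack'' cannot be made to work because the quantity you claim to control is not the one the potential actually bounds.
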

The analysis of MWEM keeps track of the KL divergence $\mathrm{D}_{\mathrm{KL}} \left(\D_B\| \D_t \right)$ and shows that at time $t$ this value decreases by approximately the error of query $q_t$. At a high level,
$\mathrm{D}_{\mathrm{KL}} \left(\D_B\| \D_1 \right) \leq \ln(|\X|)$. Moreover, KL divergence of any two distributions is non-negative. Therefore, error of any query $q\in \Q$ after $T$ steps follows the above bound. 

\paragraph{Query Answering.}
To design a private query answering algorithm
for a query class $\Q$ without direct dependence on $\ln(|\Q|)$ and $\ln(|\X|)$ we leverage smoothness of distributions.
Our algorithm called the Smooth Multiplicative Weight Exponential Mechanism (Smooth MWEM), given an infinite set of queries $ \mathcal{Q} $, considers a $\gamma$-cover $ \mathcal{Q}'$ under the uniform distribution.
Then, it runs the MWEM algorithm with $\mathcal{Q}'$ as the query set and constructs an empirical distribution $ {\overline{\D}}$.
Finally, upon being requested an answer to a query $q \in \mathcal{Q} $, it responds with $ q'(  {\overline{\D}} )$, where $q' \in \mathcal{Q} '$ is the closest query to $q$ under the uniform distribution. 
This algorithm is presented in \hyperref[sec:DPProofs]{Appendix \ref*{sec:DPProofs}}.
Note that $\mathcal{Q}'$ does not depend on the data set $B$.
This is the key property that enables us to work with a finite $\gamma$-cover of $\Q$ and extend the privacy guarantees of  MWEM to infinite query classes. In comparison, constructing a $\gamma$-cover of $\Q$ with respect to the empirical distribution $\D_B$ uses private information.

Let us now analyze the error of our algorithm and outline the reasons it does not directly depend on $\ln(|\Q|)$ and $\ln(|\X|)$.
Recall that from the $\left( \sigma , 0 \right) $-smoothness,  there is a distribution $\overline{\D_B}$ that is $\sigma$-smooth and $q \left( \D_B  \right) = q\left( \overline{\D_B} \right) $ for all $q \in \mathcal{Q} $. 
Furthermore, $\mathcal{Q}'$ can be taken to be a subset of $\mathcal{Q}$ and thus $B$ is $ \left( \mathcal{\sigma} , 0 \right) $-smooth with respect to $\mathcal{Q}' $. 
The approximation of $\mathcal{Q}$ by a $\gamma$-cover introduces error in addition to the error of \autoref{thm:MWEM}. 
This error is given by $ \abs{ q \left( \D_B \right) - q' \left( \D_B \right) } \leq 2 \cdot\Pr_{\U} \left[ q'\left( x \right) \neq q\left( x \right)  \right] \sigma^{-1}  \leq 2\gamma/\sigma$. 
Note that $|\Q'| \leq (41/\gamma)^{\vc( \mathcal{Q} )}$, therefore, this removes the error dependence  on the size of the query set $\mathcal{Q}$ while adding a small error of $2\gamma/\sigma$. 
Furthermore, \autoref{thm:MWEM} dependence on $\ln(|\X|)$ is due to the fact that for a worst-case (non-smooth) data set $B$, $\mathrm{D}_\mathrm{KL}(\D_B\| \U)$ can be as high as $\ln(\abs{ \X }  )$. For a $\left( \sigma , 0 \right)$-smooth data set, however, $\mathrm{D}_\mathrm{KL}( \overline{\D_B} \|\U)\leq \ln(1/\sigma)$. 
This allows for faster error convergence. 
Applying these ideas together and setting $\gamma = \sigma/2n$ gives us the following theorem whose proof is
 {deferred to \hyperref[app:datareleaseanalysis]{Appendix \ref*{sec:DPProofs}}.}

\begin{theorem} \label{thm:DPMW}
    For any $  \left( \sigma , 0 \right) $-smooth dataset $B$ of size $n$, a query class $\Q$ with VC dimension $d$, $T \in \mathbb{N}$ and $\epsilon>0$, \hyperref[alg:QueryAnswering]{Smooth Multiplicative Weights Exponential Mechanism} is $\epsilon$-differentially private and with probability at least $1 - 2T  \left(  \nicefrac{\gamma}{41} \right)^{ \vc\left( \mathcal{Q} \right) }$, 
     {calculates values  $v_{q}$  for all $q\in \Q$ such that}
    \begin{equation*}
        \max_{q \in \Q }\left\{  \abs{ v_{q} - q\left(\D_B \right) } \right\} \leq \frac{1}{n} + 2  \sqrt{\frac{\log\left( \nicefrac{1}{\sigma} \right) }{T}}+\frac{10 T d \log\left( \nicefrac{2n}{\sigma} \right) }{\epsilon n }.
    \end{equation*}
\end{theorem}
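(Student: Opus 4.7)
The proof splits into privacy and utility. Privacy is essentially automatic: the $\gamma$-cover $\Q' \subseteq \Q$ is constructed from the uniform distribution $\U$ without looking at $B$, so Smooth MWEM is just MWEM instantiated on the data-independent query class $\Q'$, and the final answer $v_q = q'(\overline{\D})$ is a deterministic post-processing of the released $\overline{\D}$ together with the public map $q \mapsto q'$. Hence $\epsilon$-differential privacy is inherited directly from \hyperref[thm:MWEM]{Theorem~\ref*{thm:MWEM}}. For utility, I would fix $q \in \Q$, let $q' \in \Q'$ be its closest cover element under $\U$, and split the error via the triangle inequality into an ``MWEM error'' $|q'(\overline{\D}) - q'(\D_B)|$ and a ``cover error'' $|q'(\D_B) - q(\D_B)|$.

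\textbf{MWEM error with a smooth initial potential.} I would revisit the proof of \hyperref[thm:MWEM]{Theorem~\ref*{thm:MWEM}} and observe that its dependence on $\log|\X|$ enters only through the initial KL potential $\mathrm{D}_{\mathrm{KL}}(\D_B \| \U) \le \log|\X|$. By $(\sigma,0)$-smoothness, let $\overline{\D_B}$ be the $\sigma$-smooth witness, for which $q(\overline{\D_B}) = q(\D_B)$ for every $q \in \Q \supseteq \Q'$. All algorithmic quantities touching $B$ inside MWEM---the Exponential Mechanism scores, the Laplace-perturbed estimates $m_t$, and the multiplicative-weights update---depend on $B$ only through queries in $\Q'$, so conceptually swapping the target $\D_B$ for $\overline{\D_B}$ leaves the execution unchanged. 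Rerunning the telescoping potential argument with $\overline{\D_B}$ in place of $\D_B$ replaces the initial potential by $\mathrm{D}_{\mathrm{KL}}(\overline{\D_B}\|\U) \le \log(1/\sigma)$, where the inequality uses the pointwise smoothness bound $p_{\overline{\D_B}}(x) \le u(x)/\sigma$. This yields
\[
|q'(\overline{\D}) - q'(\D_B)| \;\le\; 2\sqrt{\frac{\log(1/\sigma)}{T}} \;+\; \frac{10\,T \log|\Q'|}{\epsilon n},
\]
with failure probability $2T/|\Q'|$. The main delicate point is verifying that every step of the original MWEM proof tolerates this substitution---in particular, that nonnegativity of the KL potential, which is what closes the telescoping argument, remains valid for $\overline{\D_B}$.

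\textbf{Cover error and final assembly.} Let $A = \{x \in \X : q(x) \neq q'(x)\}$, so $\U(A) \le \gamma$ by the cover property. The pointwise density bound $p_{\overline{\D_B}} \le u/\sigma$ gives $\overline{\D_B}(A) \le \gamma/\sigma$, and since $q, q' \in \Q$ and $\overline{\D_B}$ agrees with $\D_B$ on all of $\Q$,
\[
|q'(\D_B) - q(\D_B)| \;=\; |q'(\overline{\D_B}) - q(\overline{\D_B})| \;\le\; 2\,\overline{\D_B}(A) \;\le\; \frac{2\gamma}{\sigma}.
\]
Choosing $\gamma = \sigma/(2n)$ turns this into $1/n$, plugs $|\Q'| \le (41/\gamma)^{\vc(\Q)} = (82n/\sigma)^{d}$ into the MWEM error (with the $82$ absorbed into the leading constant $10$ inside the logarithm), and makes the failure probability $2T/|\Q'| \le 2T(\gamma/41)^{\vc(\Q)}$, matching the statement. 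Adding the two pieces produces the claimed bound.
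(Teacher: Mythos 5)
Your proposal is correct and follows essentially the same route as the paper: inherit privacy by post-processing on a data-independent cover, split the error via the triangle inequality into a cover-approximation term and an MWEM term, and rerun the MWEM potential argument with $\overline{\D_B}$ in place of $\D_B$ (valid because the algorithm touches $B$ only through queries in $\Q' \subseteq \Q$, on which $\D_B$ and $\overline{\D_B}$ agree) so that the initial potential becomes $\mathrm{D}_{\mathrm{KL}}(\overline{\D_B}\|\U) \le \log(1/\sigma)$ rather than $\log|\X|$. The only cosmetic difference is that the paper's proof text sets $\gamma = \sigma/(4n)$ while its algorithm (and your write-up) uses $\gamma = \sigma/(2n)$; your choice is the one that matches the $1/n$ term in the stated bound.
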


\paragraph{Data Release.}
Above we described a procedure for query answering that relied on the construction of a data set. One could ask whether this leads to a solution to the data release problem as well.
An immediate, but ineffective, idea is to output distribution $ {\overline{\D}}$ constructed by our algorithm in the previous section. 
The problem with this approach is that while $q'( {\overline{\D}})\approx q'(\D_B)$ for all queries in the cover $\Q'$, there can be queries $q\in \Q\setminus \Q'$ for which $\abs{q( {\overline{\D}}) - q(\D_B)}$ is quite large. This is due to the fact that even though  {$B$ is $(\sigma,0)$-smooth (and $\overline{\D_B}$ is $\sigma$-smooth),} the repeated application of multiplicative update rule may result in distribution $ {\overline{\D}}$ that is far from being smooth.

To address this challenge, we introduce
\hyperref[alg:QueryAnswering]{Projected Smooth Multiplicative Weight Exponential Mechanism} (Projected Smooth MWEM)
that
ensures that $\D_t$ is also $\sigma$-smooth by projecting it on the convex set of all $\sigma$-smooth distributions. More formally, let $\mathcal{K}$ be the polytope of all $\sigma$-smooth distributions over $\X$ and let $\tilde{\D}_t$ be the outcome of the multiplicative update rule of \cite{NIPS2012_4548} at time $t$. Then, Projected Smooth MWEM mechanism uses 
$\D_t=  \argmin_{\D \in \mathcal{K} } \mathrm{D}_{\mathrm{KL}}(\, \D \| \tilde{\D }_t  ).$
To ensure that these projections do not negate the progress made so far, measured by the decrease in KL divergence, we note that for any $ \overline{ \D_B} \in \mathcal{K}$ and any $\tilde \D_t$, we have
$\mathrm{D}_{\mathrm{KL}}( \overline{ \D_B} \| \tilde\D_t\, ) \geq \mathrm{D}_{\mathrm{KL}}( \overline{ \D_B} \| \D_t)+\mathrm{D}_{\mathrm{KL}}( \D_t \| \tilde\D_t ).$
That is, as measured by the decrease in KL divergence, the improvement with respect to $\D_t$ can only be greater  than that of $\tilde \D_t$.
Optimizing parameters $T$ and $\gamma$, we obtain the following guarantees. 
See  \hyperref[sec:DPDataRelease]{Appendix \ref*{sec:DPDataRelease}} for more details on Projected Smooth MWEM mechanism and its analysis.
\begin{theorem}[Smooth Data Release]
\label{thm:PSMWEM}
    Let $B$ be a $\sigma$-smooth database with $n$ data points. 
    For any $\epsilon , \delta > 0 $ and any query set $\mathcal{Q}  $ with VC dimension $d$,  \hyperref[alg:DataRelease]{Projected Smooth Multiplicative Weight Exponential Mechanism} is $\left( \epsilon , \delta \right)$ differentially private and with probability at least $1 - 1/poly\left( n / \sigma \right)^d$ its outcome $ {\overline{\D}}$ satisfies
       \begin{equation*}
       \max_{q\in \mathcal{Q}} \left\{   \abs{q\left(\, {\overline{\D}} \right) - q\left( \D_B \right) }    \right\} \leq O\left( \sqrt{ \frac{d}{\epsilon n}    \log ^{ \frac{1}{2} }  \left(\frac{1}{\sigma} \right)  \log\left( \frac{n}{\sigma } \right) \log\left( \frac{1}{ \delta}   \right)} \right). 
    \end{equation*}
\end{theorem}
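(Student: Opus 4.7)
The plan is to follow the general template of the MWEM analysis of \cite{NIPS2012_4548} but with two crucial modifications: replace the query set by a uniform $\gamma$-cover $\Q'$ of $\Q$ (as in Smooth MWEM), and replace the raw multiplicative-weight update by its KL-projection onto the polytope $\mathcal{K}$ of $\sigma$-smooth distributions. The first modification lets us trade $\ln|\Q|$ for $d \ln(1/\gamma)$, and the second lets us obtain a guarantee that holds for every $q\in\Q$ (not merely every $q\in\Q'$) when we evaluate on the \emph{output} distribution $\overline{\D}$.

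For \textbf{privacy}, I would argue exactly as in MWEM: in each of the $T$ rounds we invoke the Exponential Mechanism (to pick $q_t\in\Q'$ maximizing $|q_t(\D_{t-1})-q_t(\D_B)|$) and the Laplace mechanism (to estimate $m_t$); the multiplicative-weights update and the KL-projection onto $\mathcal{K}$ are both data-independent post-processing steps, so privacy is unaffected by the projection. Using advanced composition over $T$ rounds yields $(\epsilon,\delta)$-DP provided each round is run with privacy parameter $O(\epsilon/\sqrt{T\ln(1/\delta)})$, which is where the $\sqrt{\log(1/\delta)}$ factor in the final bound enters.

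For \textbf{utility}, I would track the potential $\Phi_t := \mathrm{D}_{\mathrm{KL}}(\overline{\D_B}\,\|\,\D_t)$, where $\overline{\D_B}$ is the $\sigma$-smooth witness for the smoothness of $B$. The initial value $\Phi_0 = \mathrm{D}_{\mathrm{KL}}(\overline{\D_B}\,\|\,\U) \le \ln(1/\sigma)$, since $\overline{\D_B}(x) \le u(x)/\sigma$ pointwise; this is the key place where smoothness improves on the $\ln|\X|$ term. The standard MW calculation gives $\Phi_{t-1}-\mathrm{D}_{\mathrm{KL}}(\overline{\D_B}\,\|\,\tilde\D_t) \gtrsim \eta^2 (q_t(\D_{t-1})-q_t(\D_B))^2$ for the unprojected step, and the Bregman/Pythagorean inequality $\mathrm{D}_{\mathrm{KL}}(\overline{\D_B}\,\|\,\tilde\D_t) \ge \mathrm{D}_{\mathrm{KL}}(\overline{\D_B}\,\|\,\D_t)+\mathrm{D}_{\mathrm{KL}}(\D_t\,\|\,\tilde\D_t)$ noted in the theorem's preamble shows the projection can only help. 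Summing and using $\Phi_T\ge 0$ bounds the average squared error on the selected queries $q_t$; together with the accuracy of the Exponential and Laplace mechanisms (with their composed privacy budget), this bounds $\max_{q\in\Q'}|q(\overline{\D})-q(\D_B)|$ by roughly $\sqrt{\log(1/\sigma)/T}+Td\log(n/\sigma)\sqrt{\log(1/\delta)}/(\epsilon n)$, since $|\Q'|\le(41/\gamma)^d$.

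Finally, to upgrade from $\Q'$ to $\Q$, I would use smoothness on both sides: because the projection forces $\overline{\D}\in\mathcal{K}$, and $\overline{\D_B}\in\mathcal{K}$, any pair $q,q'$ with $\Pr_\U[q\neq q']\le\gamma$ satisfies $|q(\overline{\D})-q'(\overline{\D})|\le\gamma/\sigma$ and $|q(\D_B)-q'(\D_B)|=|q(\overline{\D_B})-q'(\overline{\D_B})|\le\gamma/\sigma$, so the extra cover error is $2\gamma/\sigma$. Setting $\gamma=\sigma/\mathrm{poly}(n/\sigma)$ makes this negligible, and optimizing $T$ to balance the $\sqrt{\log(1/\sigma)/T}$ term against the composition term $Td\log(n/\sigma)\sqrt{\log(1/\delta)}/(\epsilon n)$ yields the claimed bound. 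The hardest step is the projection analysis: I expect to need to verify carefully that the KL-projection onto the polytope $\mathcal{K}$ is well-defined and efficiently implementable, and that the Pythagorean inequality above (a generalized Bregman projection identity) applies in this constrained setting — this is precisely what ensures that replacing $\tilde\D_t$ with $\D_t$ preserves all progress in the potential function while buying smoothness of the output.
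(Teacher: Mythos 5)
Your proposal matches the paper's proof essentially step for step: tracking the potential $\mathrm{D}_{\mathrm{KL}}(\overline{\D_B}\|\D_t)$, bounding $\Psi_0 \le \ln(1/\sigma)$ via smoothness, using the Pythagorean inequality for KL projection onto the convex polytope $\mathcal{K}$ to show the projection only helps the potential drop, upgrading from $\Q'$ to $\Q$ by exploiting that both $\overline{\D}$ (forced into $\mathcal{K}$) and $\overline{\D_B}$ are $\sigma$-smooth, and finishing with advanced composition over the $T$ rounds. The only deviations are immaterial constants (e.g.\ $2\gamma/\sigma$ vs.\ $\gamma/\sigma$ in the cover-error step, $\gamma=\sigma/(4n)$ vs.\ a vaguer $\mathrm{poly}(n/\sigma)$), so this is correct and essentially the paper's argument.
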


\section{Conclusions and Open Problems}
Our work introduces a framework for smoothed analysis of online and private learning
and obtain regret and error bounds that depend only on the VC dimension and the bracketing
  number of a hypothesis class and are independent of the domain size and Littlestone dimension.

Our work leads to several interesting questions for future work.
The first is to characterize learnability in the smoothed setting --- via matching lower bounds ---  in terms of  a combinatorial quantity, e.g., bracketing number.
In \hyperref[app:brackextra]{Appendix \ref*{app:brackextra}}, we discuss \emph{sign rank} and its connection to bracketing number
as a promising candidate for this characterization.
A related question is whether  there are finite VC dimension classes that cannot be learned in presence of smoothed adaptive adversaries.

Let us end this paper by noting that the Littlestone dimension plays a key role in characterizing learnability and algorithm design in the worst-case for several socially and practically important constraints~\citep{ben2009agnostic,AlonWezlHaussler}.
It is essential then to develop models that can bypass Littlestone impossibility results and  provide  rigorous guidance in achieving these constraints in practical settings.

\section*{Acknowledgements}

This work was partially supported by the NSF under CCF-1813188, the ARO under W911NF1910294 and a JP Morgan Chase Faculty Fellowship. 

     \bibliographystyle{plainnat}
    \bibliography{Ref}

\appendix

\section{Additional Related Work} \label{app:related}

Analogous models of smoothed online learning have been explored in prior work. 
\citet{NIPS2011_4262} consider online learning when the adversary is constrained in several ways and work with a notion of sequential Rademacher complexity for analyzing the regret.
 In particular, they study a related notion of  smoothed adversary and show that one can learn thresholds with regret of $O( \sqrt{T} )$ in presence of smoothed adversaries. 
\citet{Gupta_Roughgarden} consider smoothed online learning in the context online algorithm design.  
They show that while optimizing parameterized greedy heuristics  for Maximum Weight Independent Set imposes linear regret in the worst-case, in presence of smoothing this problem can be learned with sublinear regret (as long they allow per-step runtime that grows with $T$).
\cite{Cohen-Addad} consider the same problem with an emphasis on the per-step runtime  being logarithmic in $T$. They show that piecewise constant functions over the interval $[ 0,1 ]$ can be learned efficiently within regret of $O ( \sqrt{T} ) $ against a \emph{non-adaptive} smooth adversary.
Our work differs from these by upper bounding the regret using a combinatorial dimension of the hypothesis class and demonstrating techniques that generalize to large class of problems in presence of \emph{adaptive} adversaries.

{In another related work, \cite{Dispersion} introduce a notion of dispersion in online optimization (where the learner picks an instance and the adversary picks a function) that is a constraint on the number of discontinuities in the adversarial sequence of functions. They show that online optimization can be done efficiently under certain assumptions. Moreover, they show that sequences generated by \emph{non-adaptive} smooth adversaries in one dimension satisfy dispersion.
In comparison, our main results in online learning consider the more powerful adaptive adversaries.
}

{
Smoothed analysis is also used in a number of other online settings. In the setting of linear contextual bandits, \cite{kannan2018smoothed} use smoothed analysis to show that the greedy algorithm achieves sublinear regret even though in the worst case it can have linear regret. \cite{raghavan2018externalities} work in a Bayesian version of the same setting and achieve improved regret bounds for the greedy algorithm. 
{Since several algorithms are known to have sublinear regret in the linear contextual bandit setting even in the worst-case, the main contribution of these papers is to show that the simple and practical greedy algorithm has much better regret guarantees than in the worst-case.}
In comparison, we work with a setting where no algorithm can achieve sublinear regret in the worst-case.
}

{Smoothed analysis has also been  considered in the context of differential privacy. 
\cite{DPMultWeights} consider differential privacy in the interactive setting, where the queries arrive online. 
They analyze a multiplicative weights based algorithm whose running time and error they show can be vastly improved in the presence of smoothness. 
 {Some of our techniques for query answering and data release are inspired by that line of work.}
\cite{Dispersion} also differential privacy in presence of dispersion and analyze the gaurantees of the exponential mechanism.}

{Generally, our work is also related to a line of work on online learning in presence of additional assumptions resembling properties exhibited by real life data. 
\cite{PredictableSequences} consider settings where additional information in terms of an estimator for  future instances is available to the learner. They achieve regret bounds that are in terms of the path length of these estimators and can  beat $\Omega(\sqrt{T})$ if the estimators are accurate. 
\cite{dekel2017online} also considers the importance of incorporating side information in the online learning framework and show that regrets of $O(\log(T))$ in online linear optimization maybe possible when  the learner knows a vector that is weakly correlated with the future instances.}

{
More broadly, our work is among a growing line of work on beyond the worst-case analysis of algorithms~\citep{roughgarden_2020}
that 
considers the design and analysis of algorithms on instances that satisfy properties demonstrated by  real-world applications. Examples of this in theoretical machine learning mostly include improved runtime and approximation guarantees of numerous supervised
(e.g., \citep{LearningSmoothed,kalai2008decision,onebit,Masart}),
and unsupervised settings~(e.g., \citep{bilu_linial_2012, kcenter, stable_clustering, TopicModelling, Decoupling,VDW, MMVMaxCut,llyods, HardtRoth}). 
}

\section{Lack of Uniform Convergence with Adaptive Adversaries} \label{app:ExampleUC}
The following example for showing lack of uniform convergence over adaptive sequences  is due to ~\citet{haghtalab2018foundation} and is included here for completeness.

    Let $\X = [0,1]$ and $\mathcal{G} = \{g_b(x) = \mathbb{I}(x\geq b) \mid \forall b\in[0,1] \}$ be the set of one-dimensional thresholds.
    Let the distribution of the noise $\eta_i$ be the uniform distribution on $\left( - \nicefrac{1}{4} , \nicefrac{1}{4} \right)$. 
    Let $x_1 = \nicefrac{1}{2} $ and $x_2 = x_3 = \dots = x_T = \nicefrac{1}{4} $ if $\eta_1 \leq 0 $ while $x_2 = x_3 = \dots = x_T = \nicefrac{3}{4} $ otherwise. 
     In this case, we do not achieve concentration for any value of $T$, as 
    \[
    \frac 1T \sum_{t=1}^T g_{0.5}(x_t + \eta_t) = \begin{cases} 0 & \text{w.p. $ \nicefrac{1}{2} $}\\ 1 & \text{w.p. $ \nicefrac{1}{2} $} \end{cases} \qquad and \qquad  \mathbb{E} \left[  \frac 1T \sum_{t=1}^T g_{0.5}(x_t + \eta_t) \right] = \frac 12.
    \]

\section{Proofs from \hyperref[sec:regret-prelim]{Section \ref*{sec:regret-prelim}}}

\subsection{Algorithm and its Running Time} \label{app:running_regret}

While our main focus is to provide sublinear regret bounds for smoothed online learning our analysis also provides an algorithmic solution describe below.

\begin{algorithm}[H]
    \SetAlgoLined
    \caption{Smooth Online Learning}
    \label{alg:QueryAnswering}
    \DontPrintSemicolon
    \KwIn{Instance Space $\X$, Hypothesis Class $\mathcal{H}$, Smoothness parmeter $\sigma$, Time horizon $T$}
    \;  
    \emph{Cover Construction:} Compute $ \mathcal{H}'  \subseteq  \mathcal{H} $ that is a $ \gamma $-cover of $\H$ with respect to the uniform distribution on $\X$ for $\gamma = \frac{\sigma  }{4\sqrt{T}}$. \;
    \For{$t  = 1 \dots T $}{
Use a standard online learning algorithm, such as Hedge, on $\H'$ to pick an $h_t$, where the history of the play is $\{s_\tau\}_{\tau<t}$ and $\{h_\tau\}_{\tau < t}$ \; 
        Receive $s_t = \left( x_t , y_t \right)$ and suffer loss $\err_{s_t} \left( h_t \right) $. 
    }
\end{algorithm}

The running time of the algorithm comprises of the initial construction  {of $\H'$} and then running a standard online learning algorithm on $\H'$.

Standard online learning algorithms such as Hedge and FTPL take time polynomial in the size of the cover since in standard implementations they maintain a state corresponding to each hypothesis in $\H'$. 
In our setting, the size of the  cover is $(  41 \sqrt{T} / \sigma)^d $. 

The time required to construct a cover depends on the access we have to the class. 
One method is to randomly sample a set $S$ with $m= O( \vc\left( \H \right) T/ \sigma^2 )$ points from the domain uniformly and construct all possible labelings on this set induced by the class. 
The number of labellings of $S$ is bounded by $O(m^{ \vc\left( \H \right)} )$ by the Sauer--Shelah lemma.  The cover is constructed by then finding functions in the class $\H$ that are consistent with each of these labellings.  This requires us to be able to find an element in the class consistent with a given labeling, which can be done by a ``consistency'' oracle.  Naively, the above makes $2^m$ calls to the consistency oracle, one for each possible labeling of $S$.

The above analysis and runtime can be improved in several ways. First, $\H'$ can be constructed in time $O(m^{ \vc\left( \H \right)} )$  rather than $2^m$. This can be done by constructing the cover in a hierarchical fashion, where the root includes the unlabeled set $S$ and at every level one additional instance in $S$ is labeled by $+1$ or $-1$. At each node, the consistency oracle will return a function $h\in \H$ that is consistent with the labels so far or state that none exists. Nodes for which no consistent hypothesis so far exists are pruned and will not expand in the next level. Since the total number of leaves is the number of ways in which $S$ can be labeled by $\H$, i.e., $O(m^d)$, the number of calls to the consistency oracle is $O(m^d)$ as well.  
The runtime of standard online learning algorithms can also be improved significantly when an empirical risk minimization oracle is available to the learner, in which case a runtime of $O(\sqrt{ |\H'|})$  for general classes~\citep{HazanKoren} or even $\mathrm{polylog}(|\H'|) )$ for structured classes ~\citep{Oracle_Efficient}  is possible.

\subsection{Proof of \hyperref[lem:union-ada]{Lemma \ref*{lem:union-ada}}} \label{app:Proofoffinite} 

At a high level, note that any $f\in \F$ has measure at most $\epsilon/\sigma$ on any (even adaptively chosen) $\sigma$-smooth distribution. Therefore, for any fixed $f$, $\E_\adist[\sum_{t=1}^T f(x_t)] \leq T\epsilon/\sigma$. To achieve this bound over all $f\in \F$, we take a union bound over all such functions.

More formally, for any $s$
\begin{align*}
\exp\left(s\E_\adist\left[\max_{f\in \F} \sum_{t=1}^T f(x_t) \right] \right) 
& \leq \E_\adist\left[ \exp\left( s \max_{f\in \F} \sum_{t=1}^T f(x_t) \right)  \right] \qquad \text{(Jensen's inequqlity)} \\
& \leq \E_\adist\left[\max_{f\in \F} \exp\left(s  \sum_{t=}^T f(x_t) \right)  \right] \qquad \text{(Monotonicity of $\exp$)} \\
& \leq \sum_{f\in \F}\E_\adist\left[\exp\left( s \sum_{t=1}^T f(x_t) \right)  \right] . \numberthis \label{eq:exp-union}
\end{align*}
Consider a fixed $f\in \F$. Note that even when the choice of a $\sigma$-smoothed distribution $\D$ depends on earlier realizations of $x_1, \dots, x_{i-1}$, $\Pr_{x_i \sim \D}[f(x_i)] \leq \frac{\epsilon}{\sigma}$. Therefore, $\sum_{t=1}^T f(x_t)$ for $\vec x\sim \adist$ is stochastically dominated
by that of a binomial distribution $Bin(T, \epsilon/\sigma)$. 
Note that $\exp(\cdot)$ is a monotonically increasing functions and let $p = \epsilon/\sigma$. We have
\begin{equation}
\label{eq:fixed-f-exp}
\E_\adist\left[\exp\left(s \sum_{t=1}^T f(x_t) \right)  \right] \leq \sum_{v=0}^T \exp(sv) {T \choose v} p^v (1-p)^{T-v} = \big(p (\exp(s) - 1) +1 \big)^T.
\end{equation}
Combining Equations~\eqref{eq:exp-union} and \eqref{eq:fixed-f-exp} and noting that $\ln(1+x) \leq x$, we have
\begin{align*}
\E_\adist\left[\max_{f\in \F} \sum_{t=1}^T f(x_t) \right] &\leq \frac{\ln(|\F|) + T p\left(\exp(s) - 1 \right)}{s}.
\end{align*}
Let $s = \sqrt{\ln(|\F|)}/Tp$. Note that because $s\in (0,1)$, we have $\exp(s) \leq 1 + 2s$. Hence, by replacing $s$ in the above  inequality we have 
\begin{align*}
\E_\adist\left[\max_{f\in \F} \sum_{t=1}^T f(x_t) \right] \in O\left(Tp \sqrt{\ln(|\F|)} \right).
\end{align*}

\subsection{Proof of \autoref{thm:main-ada} }

Consider any hypothesis class $\H'$ and an algorithm that is no-regret with respect to any adaptive adversary on hypotheses in $\H'$. It is not hard to see that 
\begin{align*}
\E[\regret(\A, \adist)] &= \E_{\vec s\sim \adist} \left[\sum_{t=1}^T \err_{s_t}(h_t) - \min_{h\in \H}  \err_{s_t}(h_t)  \right]  \\
& \leq \E_{\vec s\sim \adist} \left[\sum_{t=1}^T \err_{s_t}(h_t) - \min_{h\in \H'}  \sum_{t=1}^T\err_{s_t}(h)  \right] + \E_{\vec s\sim \adist}\left[\min_{h'\in \H'} \sum_{t=1}^T \err_{s_t}(h') - \min \sum_{t=1}^T \err_{s_t}(h) \right]\\
& \leq O\left( \sqrt{T \ln(|\H'|)} \right) + \E_{\adist}\left[\max_{h\in \H}\min_{h'\in \H'} \sum_{t=1}^T 1\left(h(x_t) \neq h'(x_t) \right)  \right]. \numberthis \label{eq:regret-total}
\end{align*}
Therefore, it is sufficient to choose an $\H'$ of moderate size such that every function $h\in \H$ has a proxy $h'\in \H'$ even when these functions are evaluated on instances drawn from a \emph{non-iid and adaptive sequence of smooth distributions}. We next describe the choice of $\H'$. 

Let $\H'$ be a $\frac \epsilon 2$-net of $\H$ with respect to the uniform distribution $\U$, for an $\epsilon$ that we will determine later. Note that 
 any $\epsilon$-bracket with respect to $\U$ is also an $\epsilon$-net, so $|\H'| \leq \bnumber\H\U {\epsilon/2}$.\footnote{Alternatively, we can bound $|\H'|\leq (41/\epsilon)^{\vc(\H)}$ by \cite{ HAUSSLER1995217}.}
Let $\G$ be the set of symmetric differences between $h\in \H$ and its closest proxy $h'\in \H'$, that is,
\[ \G = \{g_{h, h'}(x) = 1(h(x)\neq h'(x)) \mid \forall h\in \H \text{ and } h'\in \H', \text{ s.t. } \E_\U[g_{h,h'}(x)]\leq \epsilon/2 \}.
\]
Note that because $\G$ is a subset of all the symmetric differences of two functions in $\H$, by Theorem~\ref{thm:k-fold} its bracketing number is bounded as follows.
\begin{equation}
\label{eq:bnumber-delta}
 \bnumber{\G}{\U}{\epsilon/2} \leq \left(\bnumber{\H}{\U}{\epsilon/4}\right)^4. 
\end{equation}

Let $\B(\G)$ be the set of upper $\epsilon/2$-brackets of $\G$ with respect to $\U$, i.e., for all $g\in G$, there is $b\in \B(\G)$ such that for all $x\in \X$, $g(x) \leq b(x)$ and $\E_\U[b(x) - g(x)]\leq \epsilon/2$. Note that 
\[
\E_{\adist}\left[ \max_{h\in \H}\min_{h'\in \H'} \sum_{t=1}^T 1\left(h(x_t) \neq h'(x_t) \right)  \right] = \E_{\adist}\left[\max_{g\in \G} \sum_{t=1}^T g(x_t) \right] \leq \E_{\adist}\left[\max_{b\in \B(\G)} \sum_{t=1}^T b(x_t)  \right],
\]
where the last transition is by the fact that $\B(\G)$ includes all upper brackets of $\G$.

We now note that $\B(\G)$ meets the conditions Lemma~\ref{lem:union-ada}, namely because all $g\in \G$ have measure at most $\epsilon/2$ over $\U$ and $\B(\G)$ is the set of $\epsilon/2$-upper brackets of $\G$, we have that $\E_\U[b(x)]\leq \epsilon$ for all $b\in \B(\G)$. Therefore, by Lemma~\ref{lem:union-ada} and Equation~\ref{eq:bnumber-delta}, we have 
\[\E_{\adist}\left[\max_{b\in \B(\G)} \sum_{t=1}^T b(x_t)  \right] \leq O\left(  T\frac \epsilon \sigma  \sqrt{\ln\left( \bnumber{\H}{\U}{\epsilon/4} \right)}  \right)
\]
Replacing this in Equation~\ref{eq:regret-total} we have that 
\[ 
\E[\regret(\A, \adist)]\in O\left(  \sqrt{T \ln\left( \bnumber \H\U{\epsilon/4}\right) } +  T\frac \epsilon \sigma\sqrt{\ln\left( \bnumber{\H}{\U}{\epsilon/4} \right)} \right)
\]
Choosing $\epsilon = \sigma/\sqrt{T}$ proves the claim.

\subsection{Proof of \autoref{thm:embeddable}}
        Consider the map $\psi : \mathcal{X} \to \br^m $ that embeds $\mathcal{G}$ in $m$ dimensions and let $\H$ be the class of halfspaces in $\br^m$.
        We want to bound the bracketing number of $\mathcal{G}$ by that of $\H$.
       Let $\B(\H) =\{ [ h_i , h^i ]\}_{i}$ be an $\epsilon$-bracketing for $\H$ with respect to a measure $\mu$ that we will specify later. 
Consider the set of brackets $\B' = \{ [ h_i \circ \psi  , h^i \circ \psi  ]  \mid \text{ for all } [ h_i  , h^i ]\in \B(\H) \} $.
We first argue that $\B'$ is a bracketing for $\G$ with respect to $\nu$.
       To see this, note that any $ g \in \mathcal{G} $ can be expressed as $g = h \circ \psi$ for some halfspace $h$.
        Considering the bracket $[ h_i , h^i  ] \ni h$ in $\B(\H)$. Note that $h_i\circ \psi \preceq  h \circ \psi \preceq h^i \circ \psi$ and thus $g\in [ h_i \circ \psi , h^i \circ \psi ]$. We next argue that these are $\epsilon$-brackets under measure $\nu$. Let $\mu$ be the measure such that to sample $z\sim \mu$ we  first sample $x \sim \nu$ and let $z = \psi\left( x \right)$. Note that
        \begin{equation*}
            \Pr_{x \sim \nu} \left[ h^i \left( \psi \left( x \right) \right) \neq  h_i \left( \psi\left( x \right) \right) \right] = \Pr_{z\sim \mu} \left[ h^i \left( z \right) \neq  h_i \left( z \right) \right] \leq \epsilon,
        \end{equation*}
  where the last transition is by the fact that $\B(\H)$ is an $\epsilon$-bracketing for $\H$ with respect to $\mu$. This concludes that $\B'$ is an $\epsilon$-bracketing for $\G$ with respect to $\nu$. We complete the proof by using  \autoref{thm:half_brack} to bound $|\B'| = |\B(\H)| \leq (\nicefrac{m}{\epsilon})^{O\left( m \right)} $.

\subsection{Proof of \autoref{thm:k-fold}}

We first consider the case of $k=2$ and then extend our argument to general $k$. Let $\epsilon' = \epsilon/k$ and 
let $\B(\F_1)$ and $\B(\F_2)$ be $\epsilon'$-bracketings for $\F_1$ and $\F_2$, respectively.

For $\F_1 \cdot \F_2$, construct $\B =  \left\{ [f_\ell \cap g_\ell, f^u \cap g^u] \mid \text{for all } [f_\ell, f^u]\in \B(\F_1) \text{ and } [g_\ell, g^u]\in \B(\F_2)\right\}$. 
First note for any $f_1\in \F_1$ and $f_2\in \F_2$, $f_1\cap f_2$ is included in one of these brackets. In particular, for brackets $[f_\ell, f^u]\ni f_1$ and $[g_\ell,g^u]\ni f_2$, we have that $f_\ell \cap g_\ell \preceq f_1\cap f_2 \preceq f^u \cap g^u$ and $[f_\ell \cap g_\ell, f^u \cap g^u] \in \B$. Furthermore, 
\begin{align*}
            \Pr_{x \sim \mu} \left[  \left( f_\ell(x) \cap g_\ell( x) \right) \neq  \left( f^u(x) \cap g^u(x) \right) \right] &\leq \Pr_{x \sim \mu} \left[ \left(  f_\ell(x) \cap g_\ell(x)  \right) \neq \left(  f_\ell(x) \cap g^u ( x) \right) \right] \\
            & \hphantom{=} + \Pr_{x \sim \mu} \left[  \left( f_\ell(x) \cap g^u ( x) \right) \neq  \left( f^u(x)\cap g^u ( x)  \right) \right] \\ 
            & \leq 2 \epsilon'.
\end{align*}
Therefore, $\B$ is a $2\epsilon'$-bracketing for $\F_1\cdot \F_2$ of size $ \bnumber{\F_1}{\mu}{\epsilon'} \cdot  \bnumber{\F_2}{\mu}{\epsilon'}$.         Repeating this inductively and using $\epsilon' = \epsilon/k$, we get the claim for $k$ classes.

Similarly, for $\F_1+\F_2$, construct $\B =  \left\{ [f_\ell \cup g_\ell, f^u \cup g^u] \mid \text{for all } [f_\ell, f^u]\in \B(\F_1) \text{ and } [g_\ell, g^u]\in \B(\F_2)\right\}$. 
First note for any $f_1\in \F$ and $f_2\in \F_1$ and 
their respective brackets $[f_\ell, f^u]\ni f_1$ and $[g_\ell,g^u]\ni f_2$, we have that $f_\ell \cup g_\ell \preceq f_1\cup f_2 \preceq f^u \cup g^u$ and $[f_\ell \cup g_\ell, f^u \cup g^u] \in \B$. Furthermore, 
      \begin{align*}
            \Pr_{x \sim \mu} \left[ \left( f_\ell(x) \cup g_\ell( x) \right) \neq  \left( f^u(x) \cup g^u(x) \right) \right] &\leq   \Pr_{x \sim \mu} \left[  f_\ell  \left( x \right) \neq  f^u  \left( x \right) \right] +  \Pr_{x \sim \mu} \left[  g_\ell \left( x \right) \neq g^u \left( x \right) \right] \\
                & \leq 2 \epsilon'. 
        \end{align*}
        Therefore, $\B$ is a $2\epsilon'$-bracketing for $\F_1+\F_2$ of size $ \bnumber{\F_1}{\mu}{\epsilon'} \cdot  \bnumber{\F_2}{\mu}{\epsilon'}$.        
Repeating this inductively and using $\epsilon' = \epsilon/k$, we get the claim for $k$ classes.

As for the $\G $, the set of all symmetric differences, note that $ f_1 \Delta f_2 = \left( f_1 \cup f_2 \right) \setminus \left( f_1 \cap f_2 \right) = \left( f_1 \cup f_2 \right) \cap \overline{\left( f_1 \cap f_2 \right)} $. Furthermore, for any class $\F$, the class $\overline{\F}=\{\overline{f} \mid \forall f\in \F \}$ has the same bracketing number as $\F$. Therefore, the bracketing number of $\G$ follows from using the bracketing number $\F+\F$, $\overline{\F+\F}$, and their intersection.

\subsection{Proof of \hyperref[cor:bnumber]{Corollary \ref*{cor:bnumber}}}
The set of polynomial threshold functions in $n$ variables and of degree $d$ is embeddable as halfspaces in $O(n^d)$ dimensions using the map 
\[
\phi \left( x_1 , \dots , x_n \right) = \left(  \prod_{i\in S} x_i  \right)_{S\in \{1,\dots, n\}^{\leq d}},
\]
    which maps variables to all monomial of degree $d$. It can be seen that the number of monomials of degree at most $d$ in $n$ variables is given by $ \binom{n+d+1}{d+1} $ which is approximately $O\left(n^d \right)$ when $d$ is small.
Combining \autoref{thm:embeddable} and \autoref{thm:half_brack} completes the proof for polynomial threshold functions.

A $k$-polytope in $\br^n$ is an intersection of $k$-halfspaces in $\br^n$. Combining \autoref{thm:k-fold} and \autoref{thm:half_brack} completes the proof.

\section{More Details on Bracketing Number and Sign Rank} \label{app:brackextra}

Though bracketing numbers are a fundamental concept in statistics, until recently their connection to VC theory was not well understood. 
\cite{adams2010,adams2012} show that for 
countable (can be generalized to classes that are well approximated by countable classes) classes with finite VC dimension the bracketing numbers with respect to  any measure is finite ({this establishes what is known as a universal Gilvenko--Cantelli theorem under ergodic sampling.)}  
\begin{theorem}[Finite Bracketing Bounds for VC Classes]
    Let $\mathcal{C}$ be a countable class with finite VC dimension. 
    Then, $ \bnumber{ \mathcal{C}}{\mu}{\epsilon}  < \infty$. 
\end{theorem}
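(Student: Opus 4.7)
My strategy is a two-step reduction that combines Haussler's $L^1(\mu)$-covering bound for VC classes with an enumeration argument exploiting the countability of $\mathcal{C}$.

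\emph{Step 1: reduction to bracketing classes of small sets.} By Haussler's packing bound, for every $\epsilon>0$ there is a finite $\epsilon/4$-cover $\{g_1,\dots,g_M\}\subseteq\mathcal{C}$ of $\mathcal{C}$ under $\mu$, with $M$ depending only on $\epsilon$ and $\vc(\mathcal{C})$. Partition $\mathcal{C}=\bigsqcup_{i=1}^M\mathcal{C}_i$ by nearest cover element, so $\mu(f\Delta g_i)\leq\epsilon/4$ for every $f\in\mathcal{C}_i$. Since a finite union of finite $\epsilon$-bracketings is a finite $\epsilon$-bracketing, it suffices to bracket each $\mathcal{C}_i$ separately. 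XOR-ing with $g_i$ (a bijection preserving the bracket structure up to relabeling of $\{0,1\}$) further reduces the problem to bracketing a countable family $\mathcal{B}_i=\{f\Delta g_i:f\in\mathcal{C}_i\}$ of sets of $\mu$-measure at most $\epsilon/4$, whose VC dimension remains finite since $\mathcal{C}\Delta\mathcal{C}$ has VC dimension $O(\vc(\mathcal{C}))$.

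\emph{Step 2: greedy bracketing of small sets.} Enumerate $\mathcal{B}_i=\{B^{(1)},B^{(2)},\dots\}$ and iteratively maintain a list of candidate brackets $[L_j,U_j]$ with $\mu(U_j\setminus L_j)\leq\epsilon$. Process each $B^{(k)}$ in order: either place it in an existing bracket by updating $U_j\leftarrow U_j\cup B^{(k)}$ and $L_j\leftarrow L_j\cap B^{(k)}$ while preserving the $\mu$-spread constraint, or open a new bracket seeded by $B^{(k)}$. Haussler's packing bound applied to $\mathcal{B}_i$ in symmetric-difference distance shows that at most $(C/\epsilon)^{\vc(\mathcal{B}_i)}$ mutually $\epsilon$-separated seeds can ever arise, which caps the number of opened brackets.

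\emph{Main obstacle.} The delicate point is that being $L^1$-close to a seed does not directly give pointwise containment in the seed's bracket, and the cumulative union of absorbed ``corrections'' could in principle blow up a bracket's $\mu$-measure beyond $\epsilon$. To control this, I would apply the covering argument a second time to the disagreement class $\{B\Delta B^{\mathrm{seed}}:B\in\mathcal{B}_i\text{ close to the seed}\}$, which again has finite VC dimension but strictly smaller sets, and repeat the reduction to obtain nested refinements. Combining countability of $\mathcal{B}_i$ with $\sigma$-additivity of $\mu$, this nested procedure terminates after finitely many levels, yielding a finite $\epsilon$-bracketing. I expect this is where the proof is genuinely subtle, and where the original Adams--Nobel argument invokes measure-theoretic compactness to ensure termination; a purely combinatorial version would require carefully tracking how the VC-based packing interacts with pointwise rather than $L^1$ structure.
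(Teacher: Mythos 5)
The paper does not prove this theorem; it is cited to Adams and Nobel (\citealp{adams2010,adams2012}), so there is no internal proof to compare against and what follows is an assessment of your argument on its own terms.

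Your Step~1 is sound, with one cosmetic caveat: XOR with $g_i$ reverses the pointwise order on $\{g_i = 1\}$, so the correspondence between brackets is not literally $[L \Delta g_i, U \Delta g_i]$ but a related pair $(L', U')$ with the same gap-measure; this is easily patched. The genuine gap is exactly where you flag it, and the proposed repair does not close it. In Step~2, Haussler's packing bound caps the number of mutually $\epsilon$-separated \emph{seeds}, but a set $B^{(k)}$ that is $L^1$-close to an existing seed may be pointwise disjoint from it, so absorbing $B^{(k)}$ can push the bracket's gap-measure past $\epsilon$ at the same time that packing forbids opening a new bracket (since $B^{(k)}$ is not $\epsilon$-separated from the seeds); the greedy procedure simply gets stuck. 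Your recursion on disagreement classes $\{B \Delta B^{\mathrm{seed}}\}$ does not produce a terminating descent: those sets have measure up to roughly $\epsilon/2$, which is not a contraction of the budget; the VC dimension does not decrease under $\Delta$; and no scale parameter shrinks geometrically across levels to let $\sigma$-additivity close the loop. ``Countability plus $\sigma$-additivity ensures termination'' is a hope, not an argument. The Adams--Nobel proof uses countability and continuity of measure in a structurally different way --- forming measurable essential unions/intersections over countable subfamilies, extracting finite approximating subcollections via continuity of measure, and tying in the VC bound through the shattering structure rather than through $L^1$ packing. The obstruction you correctly identified --- that $L^1$ proximity does not control pointwise containment --- is precisely why a Haussler-cover-plus-greedy (or chaining) strategy does not yield bracketing, and a genuinely new ingredient is needed to finish.
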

Though the above theorem proves that $\epsilon$-bracketing numbers are finite, their growth rate in $1/\epsilon$  can be arbitrarily large.
See \cite{UGC} for some interesting examples of classes where the bracketing numbers grow arbitrarily fast.  

Another combinatorial quantity that can help bound the regret in presence of adaptive smooth adversaries is \emph{sign rank}.

\begin{definition}[Sign Rank]
    Let $\mathcal{X}$ be an  instance space and let $\mathcal{F} $ be a class. 
    We can denote the class naturally as $\{-1,1\}$-valued $ \mathcal{X} \times \mathcal{F} $ matrix $M_{\mathcal{F}}$ where the entry corresponding to $\left( x , f \right)$ is $ f\left( x \right) $. 
    The sign rank of a class is the highest rank of a real matrix that agrees with a finite submatrix of $M_{\mathcal{F}}$ in sign. 
    If this is unbounded, the class is said to have infinite sign rank.  
\end{definition}

    The sign rank of a class captures the dimension in which the class can be embedded as thresholds. 
    
    \begin{fact}[Sign Rank Embedding, see e.g. \cite{Satya_LinearComplexity}]
        The sign rank of a class corresponds to the smallest dimension $d$ that the class can be embedded as thresholds. 
    \end{fact}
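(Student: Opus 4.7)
The plan is to establish the equivalence of sign rank and minimum threshold embedding dimension by proving both inequalities separately. The easy direction is that an embedding as thresholds in $\br^d$ yields a sign-rank-at-most-$d$ witness. Given $\psi : \mathcal{X} \to \br^d$ and weights $w_f \in \br^d$ realizing each $f \in \mathcal{F}$ as $f(x) = \sign(\langle w_f, \psi(x)\rangle)$, I would form the real matrix $A_{x,f} := \langle w_f, \psi(x)\rangle$, observe that it factors through $\br^d$ and therefore has rank at most $d$, and note that its sign pattern reproduces $M_\mathcal{F}$. Any finite submatrix inherits rank at most $d$, so the sign rank of $\mathcal{F}$ is at most $d$.

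For the reverse direction, sign rank $d$ implies an embedding into $\br^d$ by running the same construction backwards. In the finite case, a rank-$d$ sign witness $A$ factors as $A = U V^\top$ with $U, V$ of width $d$, and setting $\psi(x) := U_{x,:}^\top$ and $w_f := V_{f,:}^\top$ immediately produces the embedding. For infinite $\mathcal{X}$ or $\mathcal{F}$, sign rank is defined as a supremum over finite submatrices, so one has to patch these finite witnesses into a single global embedding via a compactness argument: I would normalize each finite witnessing matrix so that $|\langle w_f, \psi(x)\rangle| \geq 1$ on its relevant finite subset while keeping the vectors in a bounded Euclidean ball, then extract an ultrafilter / subsequential limit along the directed system of finite subsets of $\mathcal{X} \cup \mathcal{F}$, using Tychonoff compactness of $(\overline{B_R})^{\mathcal{X} \cup \mathcal{F}}$ to ensure the limit exists coordinatewise.

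The hard part will be this infinite-to-finite patching step. A naive limit of finite factorizations could let inner products vanish, destroying the sign condition; the uniform lower bound $|\langle w_f, \psi(x)\rangle| \geq 1$ imposed on each finite restriction is what rescues the argument, because this strict inequality survives the limit and yields $|\langle w_f, \psi(x)\rangle| \geq 1 > 0$ globally. An alternative route, often cleaner in the sign-rank literature, is a Tarski--Seidenberg / ultraproduct transfer over real-closed fields, which bypasses explicit compactness bookkeeping by lifting the finite-dimensional witnessing statement directly. Either approach reduces the entire claim to the finite case, where the correspondence between rank-$d$ sign matrices and $d$-dimensional threshold embeddings is just matrix factorization.
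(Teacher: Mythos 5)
The paper does not prove this statement; it is stated as a citation-only ``Fact'' pointing to an external reference, so there is nothing internal to compare your argument against. Evaluating your proposal on its own terms: the easy direction (embedding $\Rightarrow$ sign rank $\le d$) is correct, and the finite-case reverse direction via the rank-$d$ factorization $A = UV^\top$ is the standard and correct argument. The difficulty you flag --- patching finite witnesses into one global embedding --- is indeed where the work lies, but your proposed fix does not close it.

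The specific gap is the normalization step. You propose to rescale each finite witness so that $\abs{\langle w_f, \psi(x)\rangle} \geq 1$ while keeping all vectors inside a fixed ball $\overline{B_R}$, and then invoke Tychonoff compactness of $(\overline{B_R})^{\X \cup \F}$. But a uniform radius $R$ valid for all finite subsets does not exist in general: the ratio between the vector norms and the attainable margin (often called the margin complexity of the sign matrix) can grow without bound as the finite submatrix grows, even while the sign rank stays fixed at $d$. You can rescale to enforce either constraint on any fixed finite set, but not both with $S$-independent constants, so the ambient compact space in which you want to take the limit is not well-defined. If instead you normalize the vectors to the unit sphere, you lose the margin lower bound and the coordinatewise limit can have inner products equal to $0$, which destroys the sign condition because the constraint is strict. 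The ultraproduct / Tarski--Seidenberg alternative you mention has the complementary difficulty: compactness of the first-order theory of real-closed fields hands you a witness over a possibly non-Archimedean model $K \models \mathrm{RCF}$, not over $\br$, and the standard-part map can again send inner products to $0$ (infinitesimal margins), so you cannot simply project the $K^d$-embedding down to $\br^d$. Either a genuinely different argument, a restriction to finite or countable classes with a careful downward L\"{o}wenheim--Skolem plus ordered-field-embedding step, or an appeal to the cited reference is needed to make this rigorous.
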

    \autoref{thm:embeddable} effectively says that classes with small sign rank have a slowly growing bracketing numbers and thus have low regret in the smoothed online learning setting.
    Thus, the complexity of smoothed online learning lies somewhere in between the sign rank and VC dimension.   
 On the other hand, it is known that even classes with small VC dimension can have arbitrarily large sign rank \citep{AlonWezlHaussler,Embedding,Signrank}. 
        An intermediate  question is whether classes with slow growing bracketing number also have good sign rank. 
    It would be interesting to characterize the complexity of smoothed online learning in terms of either the sign rank or bracketing numbers.

\section{Query Answering} \label{sec:DPProofs}
 
\subsection{Smooth MWEM Algorithm}
\begin{algorithm}[H]
    \SetAlgoLined
    \caption{Smooth Multiplicative Weights Exponential Mechanism}
    \label{alg:QueryAnswering}
    \DontPrintSemicolon
    \KwIn{ Universe $\X$ with $\abs{\X} = N $, Data set $B$ with $n$ records, Query set $\mathcal{Q}$, Privacy parameters $\epsilon$ and $\delta$, Smoothness parameter $\sigma$.}
    \;
       Let ${\D_0} \left( x \right) = \nicefrac{1}{N} $ for all $x\in \X$.\;  
    \emph{Cover Construction:} Compute $ \mathcal{Q}'  \subseteq  \mathcal{Q} $ that is a $\gamma$-cover of $\Q$ with respect to the uniform distribution for $\gamma = \frac{\sigma}{2n} $. \;
    \For{$i  = 1 \dots T $}{
     	\emph{Exponential Mechanism:} Sample $q_i \in \mathcal{Q}' $ according to the exponential mechanism with parameter $\nicefrac{\epsilon}{2T}$ and score function 
        \begin{equation*}
            s_{i}( \D_B, q)=n \left|q\left(\D_{i-1}\right)-q( \D_B )\right|. 
        \end{equation*} \; 
        \emph{Laplace Mechanism:} Let $m_i  = q_i  \left( \D_B \right) + \frac{1}{n} Lap \left( \nicefrac{2T}{\epsilon} \right) $ .\; 
        \emph{Multiplicative Update:} Update $\D_{i-1}$ using the rule 
        \begin{equation*}
            \D_i\left( x \right) \propto \D _{i-1}\left( x \right)   \exp\left(  \frac{ q_{i} \left( x \right)  \left(   m_i - q_i( \D_{i-1} ) \right)  }{2 }  \right) . 
        \end{equation*}
    }
    Let $ \overline{\D} = \frac{1}{T}  \sum_{i=1}^T \D_{i}   $.  \; 
    
    \KwOut{For each $q \in \mathcal{Q} $, answer with $v_q = q'\left( \overline{\D} \right)$ where $q'$ is the closest function in $ \mathcal{Q}'$ to $q$.}  
\end{algorithm}

\subsection{Proof of Theorem~\ref{thm:DPMW}} \label{app:datareleaseanalysis}
In this section we prove the following theorem. 

\medskip
\textbf{Theorem~\ref{thm:DPMW} (restated).~}\emph{
    For any $(\sigma,0)$-smooth dataset $B$ of size $n$, a query class $\Q$ with VC dimension $d$, $T \in \mathbb{N}$ and $\epsilon>0$, \hyperref[alg:QueryAnswering]{Smooth Multiplicative Weights Exponential Mechanism} is $\epsilon$-differentially private and with probability at least $1 - 2T  \left(  \nicefrac{\gamma}{41} \right)^{ \vc\left( \mathcal{Q} \right) }  $, calculates values  $v_{q}$  for all $q\in \Q$ such that 
    \begin{equation*}
        \max_{q \in \Q }\left\{   \abs{ v_q - q\left(\D_B \right) } \right\} \leq \frac{1}{n} + 2  \sqrt{\frac{\log\left( \nicefrac{1}{\sigma} \right) }{T}}+\frac{10 T d \log\left( \nicefrac{2n}{\sigma} \right) }{\epsilon n }.
    \end{equation*}
}

Let us first provide a few useful lemmas.

\begin{lemma}[Cover under Smoothness] \label{lem:SmoothCover}
Let $B$ be $(\sigma, 0)$-smooth data set.
    Let $ \mathcal{Q} '\subseteq \Q$ be a $\gamma$-cover of $\mathcal{Q}$ under the uniform distribution. 
    For a $q \in \mathcal{Q} $, let $ q' \in \mathcal{Q} $ be such that $ \Pr_{x \sim  \U  } \left[ q\left( x \right) \neq q' \left( x \right) \right] \leq \gamma $. Then, 
    \begin{equation*}
        \abs{ q \left( \D_B \right) - q' \left( \D_B \right) } \leq  \frac{2 \gamma}{\sigma }. 
    \end{equation*}
\end{lemma}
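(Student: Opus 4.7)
The plan is to combine the $(\sigma, 0)$-smoothness of $B$ with the pointwise density bound of the witness distribution in order to convert the guarantee of the $\gamma$-cover under $\U$ into a guarantee for $\D_B$. The key observation is that, although the cover $\mathcal{Q}'$ is constructed with respect to $\U$, any $\sigma$-smooth distribution can magnify the probability of any event by at most a factor of $\sigma^{-1}$.

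First, I would invoke $(\sigma, 0)$-smoothness of $B$ to obtain a $\sigma$-smooth distribution $\overline{\D_B}$ such that $q(\overline{\D_B}) = q(\D_B)$ for every $q \in \mathcal{Q}$. Since $\mathcal{Q}' \subseteq \mathcal{Q}$, the same equality holds for the nearest cover element $q' \in \mathcal{Q}'$. Thus the quantity to be bounded can be rewritten as $|q(\overline{\D_B}) - q'(\overline{\D_B})|$, reducing the problem to comparing $q$ and $q'$ under a distribution we can control.

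Second, I would pass from the difference in expected query values to the probability that the two queries disagree. Because the queries are $\{-1,+1\}$-valued, $|q(x) - q'(x)| = 2\cdot \mathbf{1}[q(x) \neq q'(x)]$, so by linearity of expectation
\[
|q(\overline{\D_B}) - q'(\overline{\D_B})| \;\leq\; \E_{x \sim \overline{\D_B}}\bigl[|q(x) - q'(x)|\bigr] \;=\; 2\Pr_{x \sim \overline{\D_B}}\bigl[q(x) \neq q'(x)\bigr].
\]
Then I would change measure from $\overline{\D_B}$ to $\U$ using smoothness: because $\overline{\D_B}$ has density pointwise at most $\sigma^{-1}$ times that of $\U$, integrating the indicator of $\{x : q(x) \neq q'(x)\}$ against $\overline{\D_B}$ gives
\[
\Pr_{x \sim \overline{\D_B}}\bigl[q(x) \neq q'(x)\bigr] \;\leq\; \sigma^{-1} \Pr_{x \sim \U}\bigl[q(x) \neq q'(x)\bigr] \;\leq\; \frac{\gamma}{\sigma},
\]
where the last inequality follows from $q'$ being the nearest element of the $\gamma$-cover to $q$ under $\U$. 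Chaining these two bounds yields the desired $2\gamma/\sigma$ inequality.

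There is no serious technical obstacle here; the lemma is essentially a change-of-measure argument with two ingredients, each supplied by a defining property of the setup. The only subtlety worth flagging is making sure that the substitution $\D_B \mapsto \overline{\D_B}$ is valid for $q'$ as well as $q$, which is why the construction deliberately takes $\mathcal{Q}' \subseteq \mathcal{Q}$ rather than merely approximating $\mathcal{Q}$ by an external family.
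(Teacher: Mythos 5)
Your proof is correct and follows essentially the same argument as the paper's: replace $\D_B$ by the $\sigma$-smooth witness $\overline{\D_B}$ (valid for both $q$ and $q'$ because $\Q' \subseteq \Q$), bound the difference in expectations by twice the disagreement probability, and change measure from $\overline{\D_B}$ to $\U$ using the pointwise density ratio $\sigma^{-1}$. Your explicit flag about why $\Q' \subseteq \Q$ matters matches the reasoning the paper relies on implicitly.
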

\begin{proof} \label{pf:SmoothCover}
    From the $ \left(  \sigma, 0 \right)$-smoothness of $B$, we get  
    \begin{align*}
         \abs{ q \left( \D_B \right) - q' \left( \D_B \right) } & = \abs{ q \left( \overline {\D_B} \right) - q' \left( \overline{\D_B} \right) } \\ 
         & \leq  \sum_{x \in D} \abs{   \left(  q\left( x \right) - q' \left( x \right) \right) }\,  \overline{\D_B}(x)    \\ 
         & \leq    \sum_{x \in \X } 2 \mathbb{I} \left( q(x) \neq q'\left( x \right) \right) \overline{\D_B}(x)\\
         & \leq  \frac{2}{ \sigma} \sum_{x \in \X }  \mathbb{I} \left( q(x) \neq q'\left( x \right) \right) \U\left( x \right) \\ 
         & \leq \frac{2}{\sigma} \Pr_{x \sim \U} \left[ q\left( x \right) \neq q' \left( x \right) \right] \\ 
         & \leq \frac{2 \gamma}{\sigma } 
    \end{align*}
    as required.
\end{proof}

Define the potential function $\Psi_{i}=\sum_{x \in  \X } \overline{ \D_{B} } (x) \log \left( \overline{ \D_{B} } (x)  / \D_{i}(x)\right) $, where $\overline{\D_B}$ is a corresponding $\sigma$-smooth distribution that matches the query answers for the $(\sigma, 0)$-smooth data set $B$.
Here we make a few observations about the potential function. 

\begin{fact} \label{fact:potentialfacts}
    For all $i \leq T$, we have $\Psi_i \geq 0.$    Furthermore, 
  $\Psi_0 \leq \log  \frac{1}{\sigma}.$
    As a result, $\Psi_0 - \Psi_T \leq \log  \frac{1}{\sigma}.$
\end{fact}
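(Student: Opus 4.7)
The plan is to establish the three parts of the fact by unpacking the definition of $\Psi_i$ as a KL divergence and exploiting smoothness together with the uniform initialization. The proof should be short, so I would sketch it in one stroke rather than setting up machinery.

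First, I would observe that $\Psi_i$ is precisely the KL divergence $\mathrm{D}_{\mathrm{KL}}\!\left(\overline{\D_B}\,\|\,\D_i\right)$, so the non-negativity claim $\Psi_i \geq 0$ for all $i \leq T$ follows immediately from Gibbs' inequality (equivalently, a one-line application of Jensen's inequality to the convex function $-\log$). This disposes of the first bullet with no additional work.

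For the second claim, I would substitute the explicit form of $\D_0$: since the algorithm initializes $\D_0(x) = 1/N$ for every $x \in \X$, we obtain
\[
\Psi_0 \;=\; \sum_{x\in \X} \overline{\D_B}(x)\, \log\!\bigl(N \cdot \overline{\D_B}(x)\bigr).
\]
Now I invoke the $\sigma$-smoothness of $\overline{\D_B}$, which is guaranteed to exist by the $(\sigma,0)$-smoothness assumption on $B$: its density is bounded pointwise by $\sigma^{-1}$ times the uniform density $u(x) = 1/N$, so $N \cdot \overline{\D_B}(x) \leq 1/\sigma$ for every $x$. Monotonicity of $\log$ then yields the pointwise bound $\log(N \cdot \overline{\D_B}(x)) \leq \log(1/\sigma)$, and averaging against the probability measure $\overline{\D_B}$ gives $\Psi_0 \leq \log(1/\sigma)$.

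The third claim then falls out immediately by combining the first two: $\Psi_0 - \Psi_T \leq \Psi_0 \leq \log(1/\sigma)$. There is no real obstacle here; the only subtle point to keep in mind is to apply the pointwise smoothness bound to the smooth representative $\overline{\D_B}$ rather than to the raw empirical distribution $\D_B$, since $\D_B$ itself may be concentrated on a small set and hence not smooth at all. Everything else is a routine manipulation of KL divergence, and the bound $\log(1/\sigma)$ is precisely what replaces the worst-case $\log |\X|$ bound used in the original MWEM analysis.
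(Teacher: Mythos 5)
Your proof is correct and matches the paper's argument: non-negativity from positivity of KL divergence, the bound $\Psi_0 \leq \log(1/\sigma)$ from the pointwise smoothness bound $\overline{\D_B}(x) \leq \sigma^{-1}\D_0(x)$ applied inside the sum, and the third claim by combining the first two. Your remark about applying smoothness to $\overline{\D_B}$ rather than $\D_B$ is on point — the paper's proof has a small notational slip there, writing $\D_B$ in the prose while correctly using $\overline{\D_B}$ in the displayed calculation.
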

    \begin{proof}
        The first claim follows from the positivity of the KL divergence. For the second one, recall that from the $\sigma$-smoothness of $\D_B$ and the fact that $\D_1$ is the uniform distribution, we have $ \D_B \left( x \right) \leq \sigma^{-1} \D_0 \left( x \right)  $ for all $x \in \mathcal{X} $.   
        \begin{align*}
            \Psi_0 &= \sum_{x\in \X } \overline{\D_B} \left( x \right) \log \frac{ \overline{\D_B} \left( x \right) }{\D_0 \left( x \right) }  \leq  \sum_{x\in \X } \overline{\D_B} \left( x \right) \log \frac{1 }{ \sigma }  = \log \frac{1}{\sigma}
        \end{align*}
        as required. 
    \end{proof}
Below is a direct adaptation of a result of  \citet{NIPS2012_4548} for bounding the change in the potential functions.
    \begin{lemma}[Lemma A.4 in \cite{NIPS2012_4548}] \label{lem:potential_improve}
        \begin{equation*}
            \Psi_{i-1} -{\Psi}_{i} \geq\left(\frac{q_{i}\left( \D_{i-1}\right)-q_{i}( \overline{ \D_{B} } )}{2 }\right)^{2}-\left(\frac{m_{i} - q_{i}( \overline{ \D_{B} }) }{2  }  \right)^{2}. 
        \end{equation*}
    \end{lemma}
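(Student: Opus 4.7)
\medskip

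\textbf{Proof plan.} The plan is to prove the single-step potential-decrease inequality by (i) unrolling the definition of $\Psi$ against the multiplicative-weights update, (ii) reducing the problem to upper-bounding the log of the normalization constant $Z_i$, and (iii) invoking a Hoeffding-type bound on the log moment generating function of the bounded random variable $q_i(x)$ under the distribution $\D_{i-1}$.

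First, I would set up convenient notation: write $a = q_i(\D_{i-1})$, $b = q_i(\overline{\D_B})$, $m = m_i$, and the step size $c = (m-a)/2$, so that the update rule becomes $\D_i(x) = \D_{i-1}(x)\exp(c\,q_i(x))/Z_i$ with normalizer $Z_i = \E_{x\sim \D_{i-1}}[\exp(c\,q_i(x))]$. Expanding the two KL divergences that define $\Psi_{i-1}$ and $\Psi_i$, the $\overline{\D_B}\log\overline{\D_B}$ terms cancel and one is left with
\[
\Psi_{i-1} - \Psi_i \;=\; \sum_x \overline{\D_B}(x)\,\log\frac{\D_i(x)}{\D_{i-1}(x)} \;=\; c\,q_i(\overline{\D_B}) \;-\; \log Z_i \;=\; cb - \log Z_i.
\]
Thus the whole claim reduces to the inequality $\log Z_i \le c(m+a)/2 = (m^2-a^2)/4$, because once this is granted a short algebraic manipulation rewrites $cb - (m^2-a^2)/4$ exactly as $\tfrac{(a-b)^2 - (m-b)^2}{4}$, matching the stated right-hand side.

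Second, I would bound $\log Z_i$. Since $q_i$ is a counting query, $q_i(x)\in[0,1]$, so Hoeffding's lemma applied to the centered variable $q_i(x)-a$ gives
\[
\log Z_i \;=\; ca + \log \E_{x\sim \D_{i-1}}\bigl[\exp\bigl(c(q_i(x)-a)\bigr)\bigr] \;\le\; ca + \frac{c^2}{8}.
\]
Because $c = (m-a)/2$ we have $c^2 = c\cdot(m-a)/2$, and therefore $c(m+a)/2 - ca - c^2/8 = c(m-a)/2 - c^2/8 = c^2 - c^2/8 = 7c^2/8 \ge 0$. This shows $\log Z_i \le c(m+a)/2$, as required.

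Finally, I would combine the two pieces. Substituting the bound on $\log Z_i$ into $\Psi_{i-1}-\Psi_i = cb-\log Z_i$ yields
\[
\Psi_{i-1} - \Psi_i \;\ge\; cb - \frac{m^2 - a^2}{4} \;=\; \frac{(m-a)(2b - m - a)}{4} \;=\; \left(\frac{a-b}{2}\right)^{\!2} - \left(\frac{m-b}{2}\right)^{\!2},
\]
where the last equality is the identity $(a-b)^2-(m-b)^2=(m-a)(2b-m-a)\cdot(-1)$ rearranged. None of the individual steps is hard; the main obstacle is really just executing the algebra carefully and choosing the right form of the moment-generating-function bound. A secondary subtlety is that a naive use of $\log(1+x)\le x$ on $Z_i \le 1 + a(e^c-1)$ does not land in quite the right form, so I would commit to the Hoeffding route (or an equivalent bound $\exp(c q_i(x)) \le 1 + c q_i(x) + c^2 q_i(x)^2$ valid for the small $c$ used by the algorithm) rather than trying a first-order Taylor estimate.
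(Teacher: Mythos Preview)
The paper does not actually prove this lemma---it is stated as a direct citation of Lemma~A.4 from \cite{NIPS2012_4548}, so there is no in-paper proof to compare against. Your argument is correct: the KL unrolling reduces the claim to $\log Z_i \le c(m+a)/2$, and Hoeffding's lemma on the bounded variable $q_i(x)\in[0,1]$ gives $\log Z_i \le ca + c^2/8$, which (since $c=(m-a)/2$) falls short of $c(m+a)/2$ by exactly $7c^2/8\ge 0$, as you compute. The final difference-of-squares identity $(a-b)^2-(m-b)^2=(m-a)(2b-m-a)$ is right.

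If anything, your Hoeffding route is slightly cleaner than the original MWEM argument, which for $\{0,1\}$-valued queries typically writes $e^{cq}=1+q(e^c-1)$, so $Z_i=1+a(e^c-1)$, and then controls $e^c-1$ by a second-order bound that implicitly uses $|c|\le 1$. Your approach works uniformly in $c$ and does not need any truncation of $m_i$.
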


    \begin{lemma}[Exponential and Laplace Mechanism guarantees]  \label{lem:ExpLapcon}
        With probability at least $1 - \nicefrac{2T}{ \abs{\mathcal{Q} ' } } $, we have 
     
        \begin{equation*}
            \abs{q_i\left( \D_{i - 1} \right) - q_i \left( \D_B \right) } \geq \max_{q' \in \mathcal{Q}'} \left\{ q'\left( \D_i \right) - q' \left( \D_B \right)  \right\} - \frac{8T \log \abs{\mathcal{Q}'} }{\epsilon n}
        \end{equation*}
        and 
        \begin{equation*}
            \abs{m_i - q_i \left( \D_B \right) } \leq \frac{2T \log \abs{\mathcal{Q}'} }{\epsilon n }.
        \end{equation*}
    \end{lemma}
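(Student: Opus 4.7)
The statement bundles two standard utility bounds — one for the exponential mechanism and one for the Laplace mechanism — applied at iteration $i$ of Smooth MWEM, together with a union bound over the $T$ iterations. Accordingly, the plan is to invoke the textbook utility guarantee for each mechanism separately, tune the failure parameter to $1/\abs{\Q'}$ so that a union bound over $2T$ events yields failure probability at most $2T/\abs{\Q'}$, and then rescale by $n$ to convert counts into averages.

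For the exponential-mechanism inequality, the first step is a sensitivity check on the score $s_i(\D_B,q)=n\abs{q(\D_{i-1})-q(\D_B)}$. The term $q(\D_{i-1})$ is data-independent, and $q(\D_B)=\frac{1}{n}\sum_j q(x_j)$ changes by at most $1/n$ when a single record of $B$ is replaced (queries are $\{0,1\}$-valued), so $s_i$ has sensitivity $1$ in $B$. The exponential mechanism is run with privacy parameter $\epsilon/(2T)$ and range $\Q'$, so its standard utility bound gives, for any $\beta>0$,
\[
s_i(\D_B,q_i) \;\geq\; \max_{q\in\Q'} s_i(\D_B,q) \;-\; \frac{4T}{\epsilon}\ln\!\left(\frac{\abs{\Q'}}{\beta}\right)
\]
with probability at least $1-\beta$. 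Setting $\beta=1/\abs{\Q'}$ and dividing by $n$ yields the displayed bound with constant $8T\ln\abs{\Q'}/(\epsilon n)$, since $\max_{q\in\Q'}\{q(\D_{i-1})-q(\D_B)\}\leq \max_{q\in\Q'}\abs{q(\D_{i-1})-q(\D_B)}$.

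For the Laplace inequality, I would apply the tail bound $\Pr[\abs{\mathrm{Lap}(b)}\geq b\ln(1/\beta)]\leq \beta$ with scale $b=2T/\epsilon$ and $\beta=1/\abs{\Q'}$. Since $m_i-q_i(\D_B)=\frac{1}{n}\mathrm{Lap}(2T/\epsilon)$, this directly gives $\abs{m_i-q_i(\D_B)}\leq 2T\ln\abs{\Q'}/(\epsilon n)$ with probability at least $1-1/\abs{\Q'}$. Taking a union bound over the $T$ iterations and the two mechanisms yields total failure probability at most $2T/\abs{\Q'}$.

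There is no real conceptual obstacle here; the lemma is essentially a bookkeeping assembly of the standard exponential- and Laplace-mechanism utility bounds. The only point that requires a bit of care is aligning the privacy parameter $\epsilon/(2T)$ (chosen so that basic composition over the $T$ exponential- and $T$ Laplace-mechanism invocations yields overall $\epsilon$-differential privacy, matching the privacy claim of Theorem~\ref{thm:DPMW}) with the corresponding utility constants, which is what produces the specific factors $8T$ and $2T$ in the two bounds.
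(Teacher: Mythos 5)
Your proof is correct and is exactly the argument the paper relies on but does not spell out: Lemma~\ref{lem:ExpLapcon} is presented as a direct adaptation of the exponential-- and Laplace--mechanism utility bounds used in \cite{NIPS2012_4548}, and your sensitivity check, your instantiation of the utility tail with privacy parameter $\epsilon/(2T)$ and confidence parameter $\beta = 1/\abs{\Q'}$, the rescaling by $n$ to pass from the score to $\abs{q(\D_{i-1})-q(\D_B)}$, and the union bound over $2T$ events all reproduce the stated constants $8T\log\abs{\Q'}/(\epsilon n)$ and $2T\log\abs{\Q'}/(\epsilon n)$.

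One point is worth flagging explicitly. As printed, the right-hand side of the first inequality reads $\max_{q'\in\Q'}\{q'(\D_i)-q'(\D_B)\}$, whereas the exponential-mechanism score at round $i$ is $s_i(\D_B,q)=n\abs{q(\D_{i-1})-q(\D_B)}$, so the utility bound you derive (correctly) involves $\max_{q'\in\Q'}\abs{q'(\D_{i-1})-q'(\D_B)}$, i.e., with index $i-1$. There is no general argument that the multiplicative-weights update from $\D_{i-1}$ to $\D_i$ decreases the worst-case query error, so one cannot pass from the $\D_{i-1}$ bound to a $\D_i$ bound. The $\D_i$ in the printed lemma is best read as an index slip: the same $\D_i$ appears in the restatement inside the proof of Theorem~\ref{thm:DPMW}, where Lemma~\ref{lem:potential_improve} genuinely supplies $q_i(\D_{i-1})-q_i(\D_B)$, and the telescoping argument only needs the $\D_{i-1}$ form. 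Your proof establishes exactly that form, which is the one the downstream analysis uses; just state the corrected index so the reader is not misled.
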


Here we recall again the error guarantees from \cite{NIPS2012_4548}. 

\begin{theorem}[\cite{NIPS2012_4548}] \label{thm:MWEM_App}
    For any data set $B$ of size $n$, a finite query class $\Q$, $T \in \mathbb{N}$ and $\epsilon>0$, MWEM is $\epsilon$-differentially private and with probability at least $1 - \nicefrac{2T}{|\mathcal{\Q}|}$ produces a distribution  $\overline{\D}$ over $\X$ such that 
    
    \begin{equation*}
        \max_{q \in \Q }\left\{   \abs{q\left(\overline{\D}\right) - q\left(\D_B \right) } \right\} \leq 2  \sqrt{\frac{\log |\mathcal{X}|}{T}}+\frac{10 T \log |\mathcal{Q}|}{\epsilon n} .
    \end{equation*}
\end{theorem}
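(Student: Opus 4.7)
The statement to prove is Theorem~\ref{thm:MWEM_App}, the original guarantee for the MWEM algorithm. The proof splits cleanly into a privacy argument and an accuracy argument, and the lemmas already collected in the excerpt (Fact~\ref{fact:potentialfacts}, Lemma~\ref{lem:potential_improve}, Lemma~\ref{lem:ExpLapcon}) are set up precisely to drive this analysis.

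For privacy, the plan is to use basic composition across the $T$ rounds. In each round the algorithm touches the data set $B$ in exactly two ways: (i) via the exponential mechanism that selects $q_i$ using the score $s_i(\D_B,q)=n|q(\D_{i-1})-q(\D_B)|$, and (ii) via the Laplace measurement $m_i=q_i(\D_B)+\mathrm{Lap}(2T/\epsilon)/n$. First I would check that $s_i$ has sensitivity $1$ with respect to changing one record of $B$ (because $q$ is $[-1,1]$-valued so $q(\D_B)$ moves by at most $1/n$, and the factor $n$ in the score restores sensitivity $1$); hence the exponential mechanism with parameter $\epsilon/(2T)$ is $(\epsilon/(2T))$-DP. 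The Laplace step is $(\epsilon/(2T))$-DP by the standard sensitivity-$1/n$ calculation for $q_i(\D_B)$. The multiplicative update uses only $\D_{i-1}$ and $m_i$, which are already accounted for. Summing over $T$ rounds yields $\epsilon$-DP.

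For accuracy, the plan is to telescope the potential $\Psi_i$ and then convert the aggregate bound into a uniform-over-$q$ bound via the exponential-mechanism guarantee and Jensen. Concretely, summing the inequality in Lemma~\ref{lem:potential_improve} from $i=1$ to $T$ and using Fact~\ref{fact:potentialfacts} gives
\[
\frac{1}{4}\sum_{i=1}^{T}\bigl(q_i(\D_{i-1})-q_i(\D_B)\bigr)^{2}\;\le\;\log|\X|+\frac{1}{4}\sum_{i=1}^{T}\bigl(m_i-q_i(\D_B)\bigr)^{2}.
\]
Invoke Lemma~\ref{lem:ExpLapcon} on the failure event of probability at most $2T/|\Q|$: on the good event every $|m_i-q_i(\D_B)|\le 2T\log|\Q|/(\epsilon n)$, so the right-hand side is at most $\log|\X|+T\cdot(2T\log|\Q|/(\epsilon n))^2/4$. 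Next, for an arbitrary fixed $q\in\Q$, apply the exponential-mechanism inequality $|q(\D_{i-1})-q(\D_B)|\le |q_i(\D_{i-1})-q_i(\D_B)|+8T\log|\Q|/(\epsilon n)$ (since $q_i$ approximately maximizes the score), square and use $(a+b)^2\le 2a^2+2b^2$, then average over $i$.

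The final step is to note that by Jensen's inequality $(q(\overline{\D})-q(\D_B))^{2}\le \frac{1}{T}\sum_{i=1}^{T}\bigl(q(\D_{i-1})-q(\D_B)\bigr)^{2}$, so combining the two displays above and taking a square root yields
\[
|q(\overline{\D})-q(\D_B)|\;\le\;2\sqrt{\frac{\log|\X|}{T}}+\frac{10\,T\log|\Q|}{\epsilon n},
\]
uniformly over $q\in\Q$, which is exactly the stated bound. The bound holds for all $q$ on the same good event, so the failure probability $2T/|\Q|$ from Lemma~\ref{lem:ExpLapcon} is the only one in the analysis. The main obstacle is bookkeeping: one must squeeze the various error terms (Laplace variance, exponential-mechanism gap, telescoping slack) into the two-term form with the stated constants, which requires choosing the $(a+b)^2\le 2a^2+2b^2$ step and the Jensen step in the right order and absorbing lower-order contributions into the $10$ constant on the $T\log|\Q|/(\epsilon n)$ term.
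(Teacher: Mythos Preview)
Your proposal is essentially correct and uses the same ingredients (composition for privacy; the potential telescope via Lemma~\ref{lem:potential_improve}, the exponential/Laplace guarantees of Lemma~\ref{lem:ExpLapcon}, and Jensen to pass to $\overline{\D}$). Note, however, that Theorem~\ref{thm:MWEM_App} is quoted from \cite{NIPS2012_4548}; the paper does not prove it separately but rehearses its proof inside the proof of Theorem~\ref{thm:DPMW}.

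There is one organizational difference worth flagging, because it is precisely the ``bookkeeping obstacle'' you mention. The paper (following \cite{NIPS2012_4548}) takes the square root \emph{per round} and then averages, using concavity of $\sqrt{\cdot}$:
\[
\max_{q}\abs{q(\D_{i-1})-q(\D_B)} \le \sqrt{4(\Psi_{i-1}-\Psi_i)+\tfrac{4T^2\log^2|\Q|}{\epsilon^2 n^2}}+\tfrac{8T\log|\Q|}{\epsilon n},
\]
then averages over $i$, pulls the average inside the square root, telescopes, and splits via $\sqrt{x+y}\le\sqrt{x}+\sqrt{y}$. This yields exactly the constants $2$ and $10$. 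Your ordering --- sum the squared lemma first, apply $(a+b)^2\le 2a^2+2b^2$, and take a single square root at the end --- is valid but loses a factor: one gets roughly $2\sqrt{2}\sqrt{\log|\X|/T}+\sqrt{136}\,T\log|\Q|/(\epsilon n)$, which does not quite hit the stated $2$ and $10$. If you want the theorem verbatim, switch to the per-round-sqrt-then-average order; otherwise your argument gives the result up to absolute constants.

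A minor index point: $\overline{\D}=\tfrac{1}{T}\sum_{i=1}^T\D_i$, while your Jensen step is written with $\D_{i-1}$; this off-by-one is harmless (and appears in the paper too) but should be aligned when you write it out.
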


\begin{proof}[Proof of \autoref{thm:DPMW}]
               Our proof closely resembles that of  \autoref{thm:MWEM_App} from \cite{NIPS2012_4548}.
        Note that since $B$ is $\left( \sigma , 0 \right)$-smooth, we have a $\sigma$-smooth distribution $ \overline{\D_B }   $ with $ \overline{\D_B } \left( x \right) \leq \frac{1}{\sigma N}$ such that for all $q \in \mathcal{Q}$, $ q\left( \mathcal{D}_B \right) = q\left(  \overline{ \mathcal{D}_B} \right)  $.
       Furthermore, note that we chose a cover $\Q'\subseteq \Q$. Therefore, $ q'\left( \mathcal{D}_B \right) = q'\left(  \overline{ \mathcal{D}_B} \right)$ holds for all $q' \in \mathcal{Q}' $ as well.

Note that since $ q'\left( \D_B  \right) = q'\left( \overline{\D_B} \right)  $ for all $q' \in \mathcal{Q}'  $, we can replace this in the above equation. 
    For the sake of completeness, we sketch the rest of the proof. 
    From Jensen's inequality, we have 
    \begin{equation}
\label{eq:app:jensen}
        \max_{q' \in \mathcal{Q}' } \abs{ q'\left(\overline{\D} \right) - q'\left(  \D_{B}  \right) } \leq \frac{1}{T } \sum_{i = 1}^T \max_{ {q'} \in \mathcal{Q}' } \abs{ q'\left( \D_i \right) - q'\left(  \D_{B}  \right) }.
    \end{equation}
 From \hyperref[lem:ExpLapcon]{Lemma \ref*{lem:ExpLapcon}} and \hyperref[lem:potential_improve]{Lemma \ref*{lem:potential_improve}}, we get that with probability at least $1- 2T / \abs{\mathcal{Q'}} $, we get   
    \begin{equation*}
        \Psi_{i-1} -{\Psi}_{i} \geq\left(\frac{ \max_{q' \in \mathcal{Q}'} \left\{ q'\left( \D_i \right) - q' \left( \D_B \right)  \right\} - \frac{8T \log \abs{\mathcal{Q}'} }{\epsilon n}}{2 }\right)^{2}-\left(\frac{ T\log \abs{ \mathcal{Q} } }{ \epsilon n }  \right)^{2}. 
    \end{equation*}
    Rearranging this and taking the average, we get
   \begin{equation*}
        \frac{1}{T }\sum_{i = 1}^T \max_{q' \in \mathcal{Q}' } \abs{ q'\left( \D_i \right) - q'\left( \D_B \right) } \leq  \frac{1}{T }\sum_{i = 1}^T\left[  \sqrt{ 4 \left( \Psi_{i - 1} - \Psi_i \right) + \frac{4T^2 \log^2 \abs{\mathcal{Q}' } }{ n^2\epsilon^2}  } + \frac{8T \log \abs{\mathcal{Q}' } }{ n\epsilon}  \right] .
    \end{equation*}
    
    Applying the concavity of the square root function i.e., $\frac 1T \sum_{i=1}^T (x_i)^{1/2} \leq \left( \frac 1T \sum_{i=1}^{T } x_i \right)^{1/2}$, 
      \begin{align*}
        \frac{1}{T }\sum_{i = 1}^T \max_{q \in \mathcal{Q}' } \abs{ q'\left( \D_i \right) - q'\left( \D_B \right) } &\leq \sqrt{ \sum_{i = 1}^T \frac{4 \left( \Psi_{ i - 1} - \Psi_i \right) }{T }+ \frac{4T^2 \log^2 \abs{\mathcal{Q}' } }{ n^2\epsilon^2}  } + \frac{8T \log \abs{\mathcal{Q}' } }{ n\epsilon} \\
        &\leq \sqrt{  \frac{4 \left( \Psi_0 - \Psi_T \right) }{T }+ \frac{4T^2 \log^2 \abs{\mathcal{Q}' } }{ n^2\epsilon^2}  } + \frac{8T \log \abs{\mathcal{Q}' } }{ n\epsilon} \\
            & \leq  \sqrt{ \frac{4\log\left( \frac{1}{\sigma} \right) }{ {T}}+ \frac{4T^2 \log^2 \abs{\mathcal{Q}'} }{ n^2\epsilon^2}  } + \frac{8T \log \abs{\mathcal{Q}' } }{ n\epsilon} \\ 
            & \leq  2  \sqrt{\frac{ \log\left( \frac{1}{\sigma} \right) }{T}} + \frac{10T \log \abs{\mathcal{Q}' } }{ n\epsilon}. 
        \end{align*}
The second inequality follows by summing the telescoping series. 
The third follows from \hyperref[fact:potentialfacts]{Fact \ref*{fact:potentialfacts}}. 
The last equation follows from the fact that $ \sqrt{x+y} \leq \sqrt{x} + \sqrt{y} $ for all positive $x,y$.  
{Using Equation~\ref{eq:app:jensen} and the fact that
$ \abs{\mathcal{Q}}' \leq \left( \nicefrac{41}{\gamma} \right)^d$ we have}
\begin{equation*}
    \max_{q' \in \mathcal{Q}' } \abs{ q'\left( \overline{\D} \right) - q'\left( \D_B \right) } \leq 2  \sqrt{\frac{ \log\left( \nicefrac{1}{\sigma} \right) }{T}} + \frac{10 T d \log\left( \nicefrac{2n}{\sigma} \right) }{\epsilon n }.
\end{equation*} 
 {Let
$v_q = q'(\overline{\D})$ for 
$q'\in \Q'$ that is the closest hypothesis to $q$ with respect to the uniform distribution.} Then
\begin{align*}
    \abs{  q\left( \D_B  \right) - v_q   } & = \abs{  q\left( \D_B \right)  - q'\left( \D_B \right)    + q' \left( \D_B \right)  - q' \left(\, {\overline{\D}} \right)  }  \\
    & \leq \abs{  q\left( \D_B \right)  - q'\left( \D_B \right) }   +\abs{  q' \left( \D_B \right) -q'\left( {\overline{\D}} \right) } \\
    & \leq \frac{2 \gamma }{\sigma} + 2  \sqrt{\frac{\log \nicefrac{1}{\sigma}  }{T}}+\frac{10 T  d \log \left( \nicefrac{41}{ \gamma} \right)     }{\epsilon n } .
\end{align*}
Setting $\gamma =  \frac{\sigma}{ 4 n }  $, we get the desired result. 
\end{proof}

Setting $T = \epsilon^{2/3} n^{2/3} \log ^{1/3} \left( 1 / \sigma \right)  d^{-2/3} \log^{-2/3} (2 n / \sigma) $, we get $\left( \epsilon , 0 \right)$ differential privacy with 
\begin{equation*}
    \max_{q\in \mathcal{Q}} \left\{   \abs{ {v_q} - q\left( \D_B \right) }    \right\} \leq O\left( \sqrt[3]{ \frac{d \log\left( \nicefrac{1}{\sigma} \right) \log \left(\nicefrac{2n}{\sigma} \right)  }{n \epsilon} } \right). 
\end{equation*}
Also, as noted in \cite{NIPS2012_4548}, one can use adaptive $k$-fold composition (see e.g. \cite{AlgorithmicFoundationsDp}) to get $\left( \epsilon , \delta \right)$-differential privacy with

\begin{equation*}
    \max_{q\in \mathcal{Q}} \left\{   \abs{ {v_q} - q\left( \D_B \right) }    \right\} \leq O\left( \sqrt{ \frac{d}{\epsilon n}    \log ^{ \frac{1}{2} }  \left(\frac{1}{\sigma} \right)  \log\left( \frac{n}{\sigma } \right) \log\left( \frac{1}{ \delta}   \right)} \right). 
 \end{equation*}

\allowdisplaybreaks
\subsection{Running Time of the Algorithm} \label{sec:running_SMWEM}
    The running time of the algorithm is similar to the running time of the MWEM algorithm of \cite{NIPS2012_4548}. 
    The main additional step is the construction of the cover $ \mathcal{Q}'$.
    Similar to \hyperref[app:running_regret]{ Appendix \ref*{app:running_regret} }, this cover can be constructed in time $O\left(|\Q'|\right)$.
    The exponential mechanism requires $ O\left( n \abs{ \mathcal{Q} }' \right) $ to evaluate all the queries on the cover and time $O\left(  \abs{ \mathcal{Q} }' \abs{\X} \right)$ to execute each iteration of the algorithm. 
    Recall that $|\Q'| \leq \left(41 n / \sigma \right)^d$, thus the running time is bounded by $O \left(n \left( 41 n / \sigma \right)^d + T   \left( 41 n / \sigma \right)^d \abs{\X} \right).$

This runtime can also be improved using several theoretical tricks, e.g., $q(\D_i)$ can be approximated by taking random points from $\D_i$ in time that is independent of $\X$. 
    
    Note that the runtime of our algorithm improves upon the runtime of MWEM by using smaller query sets. As noted in \cite{NIPS2012_4548}, their algorithm is amenable to many optimizations and modifications that make it very fast and practical~\cite{NIPS2012_4548}.
\section{Data Release} \label{sec:DPDataRelease}

\subsection{Projected Smooth MWEM Algorithm}
    \begin{algorithm}[H]
         \SetAlgoLined
        \caption{Projected Smooth Multiplicative Weight Exponential Mechanism}
        \label{alg:DataRelease}
        \DontPrintSemicolon

        \KwIn{Universe $\X$ with $\abs{\X} = N $, Data set $B$ with $n$ records, Query set $\mathcal{Q}$, Privacy parameters $\epsilon$ and $\delta$, Smoothness parameter $\sigma$.}
        \;
       Let $\D_0 \left( x \right) = \nicefrac{1}{N} $ for all $x\in \X$.\; 
    \emph{Cover Construction:} Compute $ \mathcal{Q}'  \subseteq  \mathcal{Q} $ that is a $\gamma$-cover of $\Q$ with respect to the uniform distribution for $\gamma = \frac{\sigma}{2n} $. \;
        \For{$i  = 1 \dots T $}{
            \emph{Exponential Mechanism:} Sample $q_i \in \mathcal{Q}' $ according to the exponential mechanism with parameter $\nicefrac{\epsilon}{2T}$ and score function 
        \begin{equation*}
            s_{i}( \D_B, q)=n \left|q\left(\D_{i-1}\right)-q( \D_B )\right|. 
        \end{equation*} \; 
        \emph{Laplace Mechanism:} Let $m_i  = q_i  \left( \D_B \right) + \frac{1}{n} Lap \left( \nicefrac{2T}{\epsilon} \right) $ .\; 
        \emph{Multiplicative Update:} Update $\D_{i-1}$ using the rule 
        \begin{equation*}
            \tilde{\D}_i\left( x \right) \propto \D _{i-1}\left( x \right)   \exp\left(  \frac{ q_{i} \left( x \right)  \left(   m_i - q_i( \D_{i-1} ) \right)  }{2 }  \right) . 
        \end{equation*} \;
            \emph{KL Projection:} Project $\tilde{\D}_i$ onto the polytope $\mathcal{K} = \big\{ \vec z : z_i \geq 0, \; \sum\limits_{i=1}^N z_i =1 , z_i \leq \frac{1}{\sigma N} \big\} $ of smooth distributions:
            \begin{equation*}
                 \D_i =  \argmin_{ \D  \in \mathcal{K}}  \mathrm{D}_{\mathrm{KL}}( \D \|\tilde{\D }_i  )
             \end{equation*} 
        }
        Let $\overline{\D} = \frac{1}{T}  \sum_{i=1}^T \D_{i}   $.  \; 
        \KwOut{Distribution $\overline{\D}$. }   
    \end{algorithm}

\subsection{Proof of Theorem~\ref{thm:PSMWEM}}
As before, let $\overline{\D_B}$ be a corresponding $\sigma$-smooth distribution that matches the query answers for the $(\sigma, 0)$-smooth data set $B$.
Define $ {\Psi}_i =   \sum_{x \in  \X } \overline{\D_{B} } (x) \log \left( \overline{\D_{B} } (x)  / {\D}_{i}(x)\right) $ and  $ \tilde{\Psi}_i =   \sum_{x \in  \X }  \overline{\D_{B} } (x) \log \left( \overline{\D_{B} } (x)  / \tilde{\D}_{i}(x)\right) $ as the intermediate potential. 
    From \hyperref[lem:potential_improve]{Lemma \ref*{lem:potential_improve}}, we know   
        \begin{equation*}
            \Psi_{i-1} - \tilde{\Psi}_{i} \geq\left(\frac{q_{i}\left( \D_{i-1}\right)-q_{i}(\D_B)}{2 }\right)^{2}-\left(\frac{m_{i} - q_{i}( \D_B) }{2  }  \right)^{2}. 
        \end{equation*}
    Using the properties of relative entropy, we show the following claim.  
    \begin{claim}
        For every $i \leq T$, we have $\tilde{\Psi}_i \geq \Psi_i$.
    \end{claim}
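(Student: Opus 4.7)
The plan is to recognize this claim as an immediate consequence of the generalized Pythagorean theorem for KL divergence (equivalently, the defining property of a Bregman projection onto a convex set). The set $\mathcal{K}$ of $\sigma$-smooth distributions is convex, since it is cut out by the linear inequalities $z_i \geq 0$, $z_i \leq 1/(\sigma N)$, and $\sum_i z_i = 1$. Therefore $\D_i = \argmin_{\D \in \mathcal{K}} \mathrm{D}_{\mathrm{KL}}(\D \| \tilde\D_i)$ is a well-defined KL projection onto a convex set.

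First I would observe that $\overline{\D_B}$ lies in $\mathcal{K}$: by construction $\overline{\D_B}$ is a $\sigma$-smooth distribution matching the query answers on $B$, so it satisfies the constraint $\overline{\D_B}(x) \leq 1/(\sigma N)$ for every $x \in \X$. Next, I would invoke the generalized Pythagorean inequality (already quoted in the body of the paper just above the algorithm): for any $\D^\star \in \mathcal{K}$ and any $\tilde\D$, letting $\D$ denote the KL projection of $\tilde\D$ onto $\mathcal{K}$,
\[
\mathrm{D}_{\mathrm{KL}}(\D^\star \,\|\, \tilde\D) \;\geq\; \mathrm{D}_{\mathrm{KL}}(\D^\star \,\|\, \D) + \mathrm{D}_{\mathrm{KL}}(\D \,\|\, \tilde\D).
\]
Applying this with $\D^\star = \overline{\D_B}$, $\tilde\D = \tilde\D_i$, and $\D = \D_i$ gives
\[
\tilde\Psi_i \;=\; \mathrm{D}_{\mathrm{KL}}(\overline{\D_B} \,\|\, \tilde\D_i) \;\geq\; \mathrm{D}_{\mathrm{KL}}(\overline{\D_B} \,\|\, \D_i) + \mathrm{D}_{\mathrm{KL}}(\D_i \,\|\, \tilde\D_i) \;=\; \Psi_i + \mathrm{D}_{\mathrm{KL}}(\D_i \,\|\, \tilde\D_i).
\]
Finally, since KL divergence is nonnegative, the extra term is $\geq 0$ and we conclude $\tilde\Psi_i \geq \Psi_i$.

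There is essentially no technical obstacle here; the only things to check are the convexity of $\mathcal{K}$ and the membership $\overline{\D_B} \in \mathcal{K}$, both of which are immediate from the definitions of $\mathcal{K}$ and of $(\sigma,0)$-smoothness. The slightly subtle point, worth flagging in the write-up for clarity, is that the Pythagorean inequality requires $\overline{\D_B}$ (not $\D_B$ itself) to lie in $\mathcal{K}$; this is exactly where $(\sigma,0)$-smoothness of the data set $B$ is used, and it is also why the analysis tracks the potential against $\overline{\D_B}$ rather than $\D_B$ throughout.
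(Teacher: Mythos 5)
Your proof is correct and takes essentially the same route as the paper: both invoke the generalized Pythagorean inequality for KL projections onto the convex set $\mathcal{K}$ with reference point $\overline{\D_B}\in\mathcal{K}$, then drop the nonnegative term $\mathrm{D}_{\mathrm{KL}}(\D_i\,\|\,\tilde\D_i)$. Your write-up is slightly more explicit in verifying convexity of $\mathcal{K}$ and membership of $\overline{\D_B}$, but there is no substantive difference.
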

    \begin{proof}
        The claim follows from the following fact about the KL divergence. 
        Let 
        \begin{equation*}
            \D_i =\argmin_{\D \in \mathcal{K}} \mathrm{D}_{\mathrm{KL}}( \D \| \tilde{\D }_{i}   )
        \end{equation*} 
        for some convex set $\mathcal{K}$.
        Then, for $ \overline{\D_{B} } \in \mathcal{K}$, 
        \begin{equation*}
            \mathrm{D}_{\mathrm{KL}}( \overline{\D_{B} } \| \tilde\D_i \, ) \geq \mathrm{D}_{\mathrm{KL}}\left( \overline{\D_{B} } \| \D_i \right)+\mathrm{D}_{\mathrm{KL}}\left( \D_i \| \tilde\D_i \right).
        \end{equation*} 
        The claim follows by $ \tilde{\Psi}_i = \mathrm{D}_{\mathrm{KL}}( \D_B \| \tilde\D_i\, )$, $ \Psi_i = \mathrm{D}_{\mathrm{KL}}\left(\D_B \| \D_i \right) $ and $\mathrm{D}_{\mathrm{KL}}\left( \D_i \| \tilde\D_i \right) \geq 0 $. 
    \end{proof}

    Together this gives 
        \begin{equation*}
            \Psi_{i-1} - {\Psi}_{i} \geq\left(\frac{q_{i}\left( \D_{i-1}\right)-q_{i}(\D_B)}{2 }\right)^{2}-\left(\frac{m_{i} - q_{i}( \D_B) }{2  }  \right)^{2}. 
        \end{equation*}
 The remainder of the analysis follows that of \autoref{thm:DPMW}.
    Note that we have $\overline{\D}$ is $\sigma$-smooth since each $ \mathcal{D}_i \in \mathcal{K } $ and $\mathcal{K}$ is a convex set.
   By \hyperref[lem:SmoothCover]{Lemma \ref*{lem:SmoothCover}}, we have $ \abs{ q' \left(\overline{\D} \right)- q\left( \overline{\D} \right) } \leq \nicefrac{2\gamma}{\sigma} $. Thus, 
    \begin{align*}
        \abs{  q\left( \D_B  \right) - q \left(\overline{\D} \right)  } & = \abs{  q\left( \D_B \right)  - q'\left( \D_B \right)    + q' \left( \D_B \right) -q'\left(\overline{\D} \right)   + q' \left(\overline{\D}\right) 
        - q\left(\overline{\D} \right) }  \\
        & \leq \abs{  q\left( \D_B \right)  - q'\left( \D_B \right) }   +\abs{  q' \left( \D_B \right) -q'\left(\overline{\D} \right) }  + \abs{ q' \left(\overline{\D} \right) 
        - q\left(\overline{\D} \right) } \\
        & \leq \frac{4 \gamma }{\sigma} + 2  \sqrt{\frac{\log \nicefrac{1}{\sigma}  }{T}}+\frac{10 T  d \log \left( \nicefrac{41}{ \gamma} \right)     }{\epsilon n } .
    \end{align*}
    Setting $ \gamma = \nicefrac{\sigma}{4n} $, we get 
    \begin{equation*}
        \abs{  q\left( \D_B  \right) - q \left( \overline{\D} \right)  } = \frac{1}{n} + 2  \sqrt{\frac{\log\left( \nicefrac{1}{\sigma} \right) }{T}}+\frac{10 T d \log\left( \nicefrac{4n}{\sigma} \right) }{\epsilon n }. 
    \end{equation*}
    
    \subsection{Running Time of \hyperref[alg:DataRelease]{Projected Smooth Multiplicative Weights Exponential Mechanism}}
    The running time is similar to the running time \hyperref[sec:running_SMWEM]{Smooth Multiplicative Weights Exponential Mechanism},with the additional projection step in each step. 
    Note that the projection in each step is a convex program and can be solved in time $\mathrm{poly} \left( \abs{\X} \right)$.
    This gives us a total running time of $O \left(n \left( 41 n / \sigma \right)^d + T   \left( 41 n / \sigma \right)^d \abs{\X} + T\mathrm{poly}(|\X|) \right).$

    In addition to the improvements discussed in the previous sections, the projection step can be performed faster by taking an approximate  Bregman projection as considered by \cite{ApproxBreg}. 
    Incorporating this into our algorithm would lead to significant speed ups. 

\section{Smooth Data Release using SmallDB Algorithm} \label{app:SmallDB}
    In this section,, we look at a different algorithm to get differential privacy when dealing with $\left( \sigma , \chi \right)$-smooth data sets. 
 {Our algorithm displayed below uses several pieces that have been introduced} by \cite{SmallDB} and \cite{DPMultWeights}.

\begin{algorithm}[H]
    \SetAlgoLined
    \caption{Subsampled Net Mechanism}
    \label{alg:SubNetMech}   
     \DontPrintSemicolon
    \KwIn{ Database $ B $ of size $n$, Query set $\mathcal{Q}$, Privacy parameter $\epsilon$, Subsampling parameter $M$, Accuracy parameter $\gamma$.}
    Sample (with replacement) a subset $V$ of size $M$ from $\X$. \;
    Sample $B'$ from amongst all data sets supported on $V$ of size 
    \begin{equation*}
        O\left( \frac{d   }{\gamma^2} \right) 
    \end{equation*}
    with probability proportional to 
    \begin{equation*}
        \exp \left( -  \frac{\epsilon \cdot n \cdot s\left( \D_{B'} , \D_B \right)  }{2}   \right)
    \end{equation*}
    where $s \left( \D_{B'} , \D_B \right)  = \max_{q \in \mathcal{Q}}  \abs{q(\D_B)  - q(\D_{B'}) }  $. \;
    \KwOut{Database $B'$}  
\end{algorithm}

First, we analyze the privacy of this algorithm.
\begin{theorem}
    The Subsampled Net Mechanism is $(\epsilon,0)$ differentially private. 
\end{theorem}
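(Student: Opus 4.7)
The proof is a straightforward application of the exponential mechanism once we separate the two stages of the algorithm. My plan is as follows.

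First, I would observe that the first step of the algorithm, the sampling of the support $V \subseteq \X$ of size $M$, uses only the domain $\X$ and the parameter $M$, and has no dependence on the database $B$. Hence for privacy analysis I may condition on any fixed realization of $V$ and argue privacy of the remainder; post-processing / averaging over the data-independent randomness of $V$ then preserves the guarantee.

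Second, conditional on $V$, the set of candidate outputs is the (finite) collection $\mathcal{B}_V$ of all size $O(d/\gamma^2)$ databases supported on $V$, and the algorithm outputs $B' \in \mathcal{B}_V$ with probability proportional to $\exp\bigl(-\tfrac{\epsilon}{2}\cdot n \cdot s(\D_{B'}, \D_B)\bigr)$. I would recognize this as the exponential mechanism with score $u(B,B') = -\, n\cdot s(\D_{B'},\D_B)$. It remains to compute the sensitivity of $u$ in its first argument. For adjacent $B, B''$ (differing in a single record) and any query $q \in \Q$ taking values in $[0,1]$, $|q(\D_B)-q(\D_{B''})| \le 1/n$; taking the maximum over $q\in\Q$ gives $|s(\D_{B'},\D_B)-s(\D_{B'},\D_{B''})| \le 1/n$, and therefore $|u(B,B') - u(B'',B')| \le 1$. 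So the sensitivity of $u$ is at most $1$, and the sampling distribution matches exactly the exponential mechanism density $\propto \exp(\epsilon\, u(B,B')/(2\Delta u))$ with $\Delta u = 1$.

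Third, I would invoke the standard privacy guarantee of the exponential mechanism: for every adjacent pair $B,B''$ and every output $B'\in \mathcal{B}_V$, the ratio of output probabilities is bounded by $\exp(\epsilon)$. Because this holds pointwise for every $V$ and the distribution over $V$ is identical under $B$ and $B''$, marginalizing over $V$ preserves the same pointwise ratio bound, giving $(\epsilon, 0)$-differential privacy of the full mechanism.

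The only place where one might slip is the sensitivity computation, in particular making sure the maximum over $\Q$ in the definition of $s$ does not inflate the sensitivity — but this is handled by a one-line triangle-inequality argument since the max of $1/n$-Lipschitz functions is $1/n$-Lipschitz. Beyond that, the argument is essentially bookkeeping: identify the second stage as the exponential mechanism with the correct scaling, and note that the first stage is data-independent.
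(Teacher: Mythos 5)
Your proof is correct and takes the same route as the paper, which simply states that the claim follows from the privacy of the exponential mechanism; you have usefully filled in the details that the paper leaves implicit, namely that the choice of $V$ is data-independent, that the score function $s(\D_{B'},\D_B)$ scaled by $n$ has sensitivity $1$ in the database argument, and that the sampling rule is then exactly the exponential mechanism with privacy parameter $\epsilon$.
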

\begin{proof}
    The privacy claim follows from the privacy of the exponential mechanism. 
\end{proof}

 {Next we bound the error of this mechanism. Let us recall the standard uniform convergence bound.}

\begin{fact}[Uniform Convergence for VC Classes, see e.g. \cite{shalev2014understanding}] \label{fact:uniformconv}
    Let $\X$ be the domain, $\mathcal{Q}$ be a class of queries over $\X$ with VC dimension $d$ and let $\D$ be a distribution. 
    Let $\D'$ be a distribution gotten by sampling $ O\left(  (\log(2 / \eta  ) + d) / \gamma^2 \right) $ items iid from $\D$ and normalizing the frequencies. 
    Then, with probability $1 - \eta$, for all $q \in \mathcal{Q} $, $\abs{q(\D') -q(\D) } \leq \gamma $. 
 \end{fact}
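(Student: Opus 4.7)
The plan is to apply the classical VC uniform convergence machinery. Since the queries in $\mathcal{Q}$ are $\{0,1\}$-valued (as required for $\vc(\mathcal{Q})$ to be meaningful), we have $q(\D) = \E_{x\sim \D}[q(x)]$ and $q(\D') = \tfrac{1}{m}\sum_{i=1}^{m} q(x_i)$ for $m = O((d+\log(2/\eta))/\gamma^2)$ iid samples $x_1,\ldots,x_m \sim \D$. Thus the statement reduces to a two-sided Glivenko--Cantelli-type bound: with probability at least $1-\eta$, the empirical mean is within $\gamma$ of the true mean uniformly over $\mathcal{Q}$.

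The main steps I would carry out, in order, are: (i) \emph{Symmetrization via a ghost sample}: draw an independent sample of size $m$ giving empirical distribution $\D''$, and show that
\[
\Pr\!\left[\sup_{q\in\mathcal{Q}} |q(\D') - q(\D)| > \gamma\right] \leq 2\Pr\!\left[\sup_{q\in\mathcal{Q}} |q(\D') - q(\D'')| > \gamma/2\right],
\]
provided $m\gamma^2$ is large enough for Chebyshev to control a single deviation. (ii) \emph{Random swaps}: introduce Rademacher signs $\epsilon_i \in \{\pm 1\}$ permuting the $i$-th point of the two samples, reducing the right-hand side to a quantity measurable with respect to the $2m$ realized points and the signs. (iii) \emph{Sauer--Shelah}: condition on the pooled $2m$ points; the projection of $\mathcal{Q}$ onto this set has cardinality at most $(2em/d)^d$. (iv) Apply Hoeffding's inequality to each of these finitely many patterns and union bound, then invert to solve for $m$ in terms of $\gamma$ and $\eta$.

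The main obstacle is matching the \emph{exact} sample complexity $O((d+\log(2/\eta))/\gamma^2)$ claimed by the statement, since the steps above, implemented naively, yield a bound of the form $O((d\log(1/\gamma) + \log(1/\eta))/\gamma^2)$ because of the $\log m$ factor inherited from Sauer--Shelah inside the exponent from Hoeffding. To shave the extraneous $\log(1/\gamma)$ one either invokes Dudley's chaining/entropy integral on the Rademacher process (leveraging that for VC classes the $L_2$ covering numbers satisfy $\log N(\epsilon,\mathcal{Q}, L_2(\mu)) \lesssim d\log(1/\epsilon)$, whose entropy integral converges) or cites the refined Vapnik--Chervonenkis / Talagrand bound directly. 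Given the standard textbook reference in the statement, the cleanest route is to invoke such a sharp off-the-shelf version (e.g., the VC fundamental theorem as presented in \cite{shalev2014understanding}), so the bulk of the argument is verifying that the hypothesis of $\{0,1\}$-valued bounded queries and an iid sample from $\D$ is exactly what that theorem requires.
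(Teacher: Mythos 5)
Your outline is correct, and the paper itself does not supply a proof of this Fact: it is stated as a standard textbook result and cited to \cite{shalev2014understanding}, which is the same off-the-shelf route you ultimately recommend. It is worth noting that you correctly flagged the one genuine technical subtlety: the naive symmetrization $\to$ Rademacher $\to$ Sauer--Shelah $\to$ Hoeffding-plus-union-bound chain in your steps (i)--(iv) yields $m = O\big((d\log(1/\gamma)+\log(1/\eta))/\gamma^2\big)$, and shaving the extraneous $\log(1/\gamma)$ to reach the stated $O\big((d+\log(1/\eta))/\gamma^2\big)$ does require chaining via Dudley's entropy integral together with Haussler's $L_2$ packing bound for VC classes (a bound the paper in fact cites, via \cite{HAUSSLER1995217}, for a different purpose), or a direct appeal to the sharp Talagrand/VC result --- exactly as you say.
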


In the following, we use the above fact to show that a randomly sampled subset of the universe approximates a $\left( \sigma , \chi \right)$-smooth database. 
The proof largely follows the domain reduction lemma of \cite{DPMultWeights}  {that achieve a similar bond by with a dependence on $\log(|\Q|)$}. We include this proof for completeness. 

\begin{lemma} \label{lem:subsampling}
    Let $\X $ be a data universe and $\mathcal{Q}$ a collection of queries over $ \X$ with VC dimension $d$  and $\D$ be $\left(  \sigma , \chi \right)$-smooth with respect to $\mathcal{Q}$. 
    Let $V \subset \X  $ of size $M$ be sampled from $\X$ at random with replacement with 
    \begin{equation*}
        M =    O\left(   \frac{  \log\left( 1 / \eta  \right) + d   }{ \sigma \gamma^2  } \right) .
    \end{equation*}
    Then, with probability $1-\eta$, there exists a $\D'$ on $V$ such that for all $q \in \mathcal{Q} $ 
    \begin{equation*}
        \abs{q\left( \D \right)  - q\left( \D' \right)  } \leq \chi + \gamma.
    \end{equation*}
 \end{lemma}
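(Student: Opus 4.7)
The plan is to reduce the problem to approximating a truly $\sigma$-smooth distribution by an empirical distribution over a uniform sample, and then to reveal such an empirical distribution hidden inside $V$ via a rejection-sampling interpretation.

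First, invoke the definition of $(\sigma,\chi)$-smoothness to obtain a $\sigma$-smooth distribution $\overline{\D}$ with $|q(\overline{\D})-q(\D)|\leq \chi$ for every $q\in\Q$. By the triangle inequality, it suffices to find a distribution $\D'$ supported on $V$ with $|q(\D')-q(\overline{\D})|\leq \gamma$ for every $q\in\Q$ with probability at least $1-\eta$ over the draw of $V$. I would couple the uniform sample $V=(v_1,\dots,v_M)$ with a Bernoulli sequence: independently for each $i$, let $b_i=1$ with probability $\sigma\cdot \overline{\D}(v_i)/\U(v_i)$, which lies in $[0,1]$ exactly because $\overline{\D}$ is $\sigma$-smooth. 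A standard rejection-sampling computation shows that conditional on $\sum_i b_i=M'$, the accepted points $\{v_i:b_i=1\}$ are i.i.d.\ from $\overline{\D}$.

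Next, I would control the number of accepted points. Since $\E[b_i]=\sigma$, we have $\E[\sum_i b_i]=\sigma M$. Applying a multiplicative Chernoff bound, with probability at least $1-\eta/2$ we have $M'\geq \sigma M/2 = \Omega((\log(1/\eta)+d)/\gamma^2)$ for the stated choice of $M$. Conditional on this event, the accepted points constitute an i.i.d.\ sample from $\overline{\D}$ of size at least the threshold required by \hyperref[fact:uniformconv]{Fact \ref*{fact:uniformconv}}, so applying that fact with failure parameter $\eta/2$ yields an empirical distribution $\D'$ supported on (a subset of) $V$ with $|q(\D')-q(\overline{\D})|\leq \gamma$ simultaneously for all $q\in\Q$. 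A union bound over the two failure events gives total failure probability at most $\eta$, and chaining with the $(\sigma,\chi)$-smoothness approximation yields $|q(\D')-q(\D)|\leq \chi+\gamma$ as required.

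The delicate point, and where I would spend the most care, is the rejection-sampling coupling: the latent Bernoulli variables $b_i$ are defined using the unknown $\overline{\D}$ and are never actually computed by the algorithm, but they are well-defined random variables on the same probability space as $V$, and their accepted subsample is distributed as an i.i.d.\ draw from $\overline{\D}$ of random size $M'$. Once this coupling is set up and $M'$ is bounded below by Chernoff, the rest follows directly from the uniform convergence fact. One minor technicality is that \hyperref[fact:uniformconv]{Fact \ref*{fact:uniformconv}} is stated for a fixed sample size, but this is handled by conditioning on $M'$ and observing that the bound depends on the sample size only through a logarithmic factor that is already absorbed into the $O(\cdot)$.
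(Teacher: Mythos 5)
Your proof is correct and takes essentially the same route as the paper's: both pass to the $\sigma$-smooth witness $\overline{\D}$ (the paper's $\D_1$), set up a rejection-sampling coupling in which a uniform draw $v$ is accepted with probability proportional to $\overline{\D}(v)/\U(v)$ (well-defined by $\sigma$-smoothness), apply a Chernoff bound to control the number of accepted samples, and then invoke \hyperref[fact:uniformconv]{Fact \ref*{fact:uniformconv}} on the accepted subsample together with a union bound and the triangle inequality. The only cosmetic difference is the direction of the Chernoff step — you fix $M$ and lower-bound the number of accepted samples, whereas the paper fixes the target number of accepted samples and upper-bounds the total number of uniform draws needed — but these are dual framings of the same estimate.
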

 \begin{proof}
     Let $\D_1$ be $\sigma$-smooth distribution that witnesses the $\left( \sigma, \chi \right)$-smoothness of $\D$. 
     If we could sample from $\D_1$, we would be done from \hyperref[fact:uniformconv]{Fact \ref*{fact:uniformconv}}. 
     But we want to get a subset that is oblivious to the distribution $\D$. 
    To achieve this, we use the smoothness of $\D_1$. 

    The idea is to sample from $\D_1$ using rejection sampling. 
    Since $\D_1$ is $\sigma$-smooth, the following procedure produces samples from $\D_1$: sample from the uniform distribution and accept sample $u$ with probability $\sigma N \D_1\left(u\right) $. 
     Note that accepted samples are distributed according to $\D_1$. 
     We repeat this process until $O\left(  (\log(2 / \eta  ) + d) / \gamma^2 \right)$ samples are accepted. 
     Since the accepted samples are distributed according to $\D_1$, from \hyperref[fact:uniformconv]{Fact \ref*{fact:uniformconv}}, there is a distribution  {$\D_2$} supported on the accepted samples such that  {with probability at least $1 - \eta/2  $    for all $q \in \mathcal{Q} $,}
     \begin{equation*}
         \abs{ q \left( \D_2 \right) - q\left( \D \right) } \leq \chi + \gamma.
     \end{equation*}

     Let $S_1$ be the coordinates corresponding to the accepted samples and $S_2$ be the coordinates corresponding to the rejected ones. 
     The key observation is that $S= S_1 \cup S_2$ is subset generated by sampling from the uniform distribution and has a distribution supported on it that approximates $\D$.
     So, it suffices to bound the size of $S$. 
     The probability that a given sample gets accepted is 
     \begin{equation*}
         \sum_{x \in \mathcal{X}} \frac{\D_1 \left( x \right) N \sigma }{N} = \sigma .
     \end{equation*}
     Thus the expected number of samples needed in the rejection sampling procedure is $M = O\left(  \frac{  \log\left( 2 / \eta  \right) + d   }{ \sigma \gamma^2  } \right) $. 
     Using a Chernoff bound, we can bound the probability that this is greater than its mean by a factor of $4$ by 
     \begin{equation*}
         e^{ - M } \leq \frac{\eta}{2}
     \end{equation*} 
     where we used that fact that $M \geq \log \left( 2/\eta \right)$ . 
  \end{proof}

 {We are finally ready to prove our theorem.}

\begin{theorem}
    For any  {data set $B$} that is $(\sigma,\chi)$-smooth with respect to a set of queries $\mathcal{Q}$ of VC dimension $d$, the output $\D''$ of the Subsampled Net Mechanism satisfies that 
    with probability $1- \eta$,    
    for all $q \in \mathcal{Q}$
    \begin{equation*}
        \abs{q\left(\, {\D_B}\right) - q\left( \D'' \right)  } \leq \chi + \tilde{O} \left( \sqrt[3]{ \frac{ d \log\left( 1 / \sigma \right) +\log\left( 1 / \eta \right)  }{\epsilon n  } } \right)  
    \end{equation*}

\end{theorem}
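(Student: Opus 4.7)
The plan is to combine three ingredients: the subsampling guarantee of \hyperref[lem:subsampling]{Lemma \ref*{lem:subsampling}}, VC-theoretic uniform convergence (\hyperref[fact:uniformconv]{Fact \ref*{fact:uniformconv}}) to approximate a smooth distribution by a small database, and the standard utility analysis of the exponential mechanism. The structure mirrors the SmallDB argument, with smoothness entering only through the subsampling step.

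First, I apply \hyperref[lem:subsampling]{Lemma \ref*{lem:subsampling}} with accuracy parameter $\gamma$ (to be optimized). With probability at least $1-\eta/3$ over the random subset $V$ of size $M = O((d+\log(1/\eta))/(\sigma\gamma^2))$, there is a distribution $\D^\star$ supported on $V$ with $|q(\D^\star) - q(\D_B)| \leq \chi + \gamma$ for all $q \in \Q$. Then \hyperref[fact:uniformconv]{Fact \ref*{fact:uniformconv}} lets me draw $k = O((d+\log(1/\eta))/\gamma^2)$ i.i.d.\ samples from $\D^\star$ to obtain, with probability $1-\eta/3$, a database $B^\star$ of size $k$ supported on $V$ whose empirical distribution is $\gamma$-close to $\D^\star$ uniformly over $\Q$. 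In particular $s(\D_{B^\star}, \D_B) \leq \chi + 2\gamma$, so a near-optimal candidate exists in the support of the exponential mechanism.

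Second, I apply the standard utility bound for the exponential mechanism. The score function in the algorithm is $n\cdot s(\D_{B'},\D_B)$, and changing one record of $B$ changes every query value by at most $1/n$, so this score has sensitivity $1$; combined with the algorithm's parameter $\epsilon/2$, the range of possible outputs has size at most $M^k$, and therefore with probability at least $1-\eta/3$,
\[
s(\D_{B'}, \D_B) \leq s(\D_{B^\star}, \D_B) + \frac{2(k\log M + \log(3/\eta))}{\epsilon n} \leq \chi + 2\gamma + \tilde{O}\!\left(\frac{d\log(1/\sigma) + \log(1/\eta)}{\gamma^2\, \epsilon n}\right),
\]
where the $\tilde{O}$ absorbs factors logarithmic in $d, 1/\gamma, 1/\eta$, and $1/\sigma$ coming from $\log M = O(\log(d/(\sigma\gamma\eta)))$.

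Finally, balancing the two error contributions by setting $\gamma^3 \asymp (d\log(1/\sigma)+\log(1/\eta))/(\epsilon n)$ and taking a union bound over the three failure events yields the claimed cube-root bound. I expect the only nontrivial step to be making sure the implicit logarithmic dependence on $\gamma$ inside $M$ is at worst polylogarithmic in the final parameters and can be absorbed into $\tilde{O}$; everything else is bookkeeping. The structural insight that does the real work is that $(\sigma,\chi)$-smoothness lets the subsampled universe $V$ have size polynomial in $d/\sigma$ rather than depending on $|\X|$, which is what removes the $\log|\X|$ factor present in the original SmallDB analysis.
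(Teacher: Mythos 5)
Your proposal follows essentially the same route as the paper: subsample a universe $V$ of size $O((d+\log(1/\eta))/(\sigma\gamma^2))$ via the smoothness-dependent subsampling lemma, use VC uniform convergence to show a small database on $V$ achieves score $\leq \chi + 2\gamma$, invoke the exponential-mechanism utility bound over the range of size roughly $M^{O(d/\gamma^2)}$, and balance $\gamma$ to get the cube root. The only (inconsequential) deviation is that you spend an $\eta/3$ budget on the existence of the near-optimal candidate database $B^\star$; the paper instead treats \hyperref[fact:uniformconv]{Fact~\ref*{fact:uniformconv}} as a probabilistic-method existence argument so that step costs no failure probability and only two events need a union bound.
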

\begin{proof}
    Consider a subset $V$ sampled  with size $M = O\left(   \frac{  \log\left( 1 / \eta_1 \right) + d   }{ \sigma\gamma^2  }  \right) $ where $\eta_1$ and $\gamma$ are parameters we will set later. 
    From \hyperref[lem:subsampling]{Lemma \ref*{lem:subsampling}}, with probability $1-\eta_1$ we have that there exists a distribution $\D'$ supported on $V$ such that for all $q \in \mathcal{Q}$
    \begin{equation*}
        \abs{q(\D') - q(\D_B) } \leq \chi + \gamma. 
    \end{equation*}
    Let us work conditioned on this event. 
    Let $A$ denote the set of all data sets supported on $V$ and let $C$ denote all data sets supported on $V$ with size $ O\left(  d \gamma^{-2} \right) $. 
    From \hyperref[fact:uniformconv]{Fact \ref*{fact:uniformconv}}, for any distribution $\D_1$ supported on $V$, there is a data set in $C$ whose distribution $\D_2$ satisfies 
    \begin{equation*}
        \abs{ q\left( \D_1 \right) - q \left( \D_2 \right) } \leq \gamma. 
    \end{equation*}

    We recall the guarantees of the exponential mechanism (see e.g. \cite{AlgorithmicFoundationsDp}):  
        Let ${B''}$ be the data base output by the exponential mechanism. 
        Then, 
        \begin{equation*}
            \Pr \left[ s \left( \D_{B''} , \D_B \right) \geq \min_{B_1 \in C} s \left( \D_{B_1  } , \D_B \right) - \frac{2}{\epsilon n} \left( \log \abs{C} + t \right) \right] \leq e^{ - t}, 
        \end{equation*}
        where $s\left( \D_B , \D_{B'} \right)  {=}\max_{q \in \mathcal{Q}}  \abs{q(\D_B)  - q(\D_{B'}) }  $. 
    Note that $\log \abs{C} \leq M^{O\left(  d \gamma^{-2} \right) } $. 
    Thus, with probability $1- \eta_2$, 
    \begin{equation*}
        s \left( \D_{B''} , \D_B \right) \geq \min_{B_1 \in C } s \left( \D_{B_1  } , \D_B \right) - \gamma
    \end{equation*}
    for 
    \begin{equation*}
         \gamma \geq \frac{4  }{\epsilon n } \log  \frac{M^{O( d \gamma^{-2} ) }}{ \eta_2 }. 
    \end{equation*}
    Since, $\min_{B_1 \in C} s \left( \D_{B_1  } , \D_B \right) \leq \chi + 2 \gamma $,
    setting  $\eta_1 = \eta_2 = \eta/2$ and solving for $\gamma$, we get 
    \begin{equation*}
        \gamma = \tilde{O} \left( \sqrt[3]{ \frac{ d \log\left( 1 / \sigma \right) + \log\left( 1 / \eta \right)  }{\epsilon n  } } \right)
    \end{equation*}
    as required. 
\end{proof}

\subsection{Running Time of \hyperref[alg:SubNetMech]{Subsampled Net Mechanism}}

The running time of the algorithm involves first sampling $M$ elements uniformly from the domain which takes time $O\left(  M \log \abs{\X} \right) $. 
Each query needs to be evaluated on the data set $B$ which takes time $n \abs{ \mathcal{Q} } $. 
Evaluating and sampling from all data bases as required by the exponential mechanism naively takes time $M^{O\left(  d \gamma^{-2} \right) }$. 
As discussed earlier, this can be sped up using sampling for approximation.

\end{document}